\documentclass[colorlinks]{article}
\usepackage[T1]{fontenc}
\usepackage[utf8]{inputenc}
\usepackage{color}
\usepackage{float}
\usepackage{bm}
\usepackage{amsmath}
\usepackage{amsthm}
\usepackage{graphicx}
\usepackage[numbers]{natbib}
\usepackage{microtype}
\usepackage[unicode=true,
 bookmarks=true,bookmarksnumbered=true,bookmarksopen=true,bookmarksopenlevel=1,
 breaklinks=false,pdfborder={0 0 0},pdfborderstyle={},backref=false,colorlinks=true]
 {hyperref}

\makeatletter

\floatstyle{ruled}
\newfloat{algorithm}{tbp}{loa}
\providecommand{\algorithmname}{Algorithm}
\floatname{algorithm}{\protect\algorithmname}

\theoremstyle{plain}
\newtheorem{thm}{\protect\theoremname}
\theoremstyle{plain}
\newtheorem{cor}[thm]{\protect\corollaryname}
\theoremstyle{remark}
\newtheorem{claim}[thm]{\protect\claimname}

\usepackage{xcolor}
\definecolor{mycol}{rgb}{0,0,0.65}
\hypersetup{
    colorlinks,
    linkcolor={mycol},
    citecolor={mycol},
    urlcolor={mycol}
}

\usepackage{amsfonts,bm,amsmath,amssymb}
\usepackage{lipsum}
\usepackage{nicefrac}


\usepackage[final]{neurips_2020}

\usepackage{url}
\usepackage{amsfonts}
\usepackage{nicefrac}


\author{Justin Domke}

%

\newcommand{\renewtheorem}[1]{%
  \expandafter\let\csname #1\endcsname\relax
  \expandafter\let\csname c@#1\endcsname\relax
  \expandafter\let\csname end#1\endcsname\relax
  \newtheorem{#1}%
}

\DeclareMathAlphabet{\mathbfsf}{\encodingdefault}{\sfdefault}{bx}{n}
\newcommand{\upgreektemplate}[2]{#2{
\renewcommand{\alpha}{\upalpha}
\renewcommand{\beta}{\upbeta}
\renewcommand{\theta}{\uptheta}
\renewcommand{\gamma}{\upgamma}
\renewcommand{\lambda}{\uplambda}
\renewcommand{\xi}{\upxi}
\renewcommand{\epsilon}{\upepsilon}
\renewcommand{\delta}{\updelta}
\renewcommand{\phi}{\upphi}
\renewcommand{\zeta}{\upzeta}
\renewcommand{\Lambda}{\Uplambda}
\renewcommand{\Gamma}{\Upgamma}
\renewcommand{\Delta}{\Updelta}
\renewcommand{\Theta}{\Uptheta}
#1
}}
\newcommand{\upgreek}[1]{\upgreektemplate{#1}{\mathsf}}
\newcommand{\bupgreek}[1]{\upgreektemplate{#1}{\mathbfsf}}
\usepackage{upgreek}

\usepackage{thmtools}
\usepackage{thm-restate}
\usepackage{hyperref}


\usepackage{enumitem}
\setlist[itemize]{leftmargin=*}
\setenumerate{itemsep=-2pt,topsep=-5pt,partopsep=-1pt,parsep=-1pt}

\usepackage[nameinlink,capitalize]{cleveref}
\AtBeginDocument{%
\let\ref\cref
}
\Crefname{equation}{Eq.}{Eqs.}
\Crefname{section}{Sec.}{Secs.}
\Crefname{algorithm}{Alg.}{Algs.}
\Crefname{thm}{Thm.}{Thms.}
\Crefname{cor}{Cor.}{Cors.}

\theoremstyle{theorem}
\renewtheorem{cor}[thm]{Corollary}
\renewtheorem{claim}[thm]{Claim}

\makeatother

\providecommand{\claimname}{Claim}
\providecommand{\corollaryname}{Corollary}
\providecommand{\theoremname}{Theorem}

\begin{document}
\global\long\def\argmin{\operatornamewithlimits{argmin}}%

\global\long\def\argmax{\operatornamewithlimits{argmax}}%

\global\long\def\prox{\operatornamewithlimits{prox}}%

\global\long\def\diag{\operatorname{diag}}%

\global\long\def\lse{\operatorname{lse}}%

\global\long\def\R{\mathbb{R}}%

\global\long\def\E{\operatornamewithlimits{\mathbb{E}}}%

\global\long\def\P{\operatornamewithlimits{\mathbb{P}}}%

\global\long\def\V{\operatornamewithlimits{\mathbb{V}}}%

\global\long\def\N{\mathcal{N}}%

\global\long\def\L{\mathcal{L}}%

\global\long\def\C{\mathbb{C}}%

\global\long\def\tr{\operatorname{tr}}%

\global\long\def\norm#1{\left\Vert #1\right\Vert }%

\global\long\def\norms#1{\left\Vert #1\right\Vert ^{2}}%

\global\long\def\pars#1{\left(#1\right)}%

\global\long\def\pp#1{(#1)}%

\global\long\def\bracs#1{\left[#1\right]}%

\global\long\def\bb#1{[#1]}%

\global\long\def\verts#1{\left\vert #1\right\vert }%

\global\long\def\vv#1{\vert#1\vert}%

\global\long\def\Verts#1{\left\Vert #1\right\Vert }%

\global\long\def\VV#1{\Vert#1\Vert}%

\global\long\def\angs#1{\left\langle #1\right\rangle }%

\global\long\def\KL#1{[#1]}%

\global\long\def\KL#1#2{{\scriptstyle \operatorname{KL}}\hspace{-2pt}\pars{#1\middle\Vert#2}}%

\global\long\def\SKL#1#2{{\scriptstyle \operatorname{SKL}}\hspace{-2pt}\pars{#1\middle\Vert#2}}%

\global\long\def\mean{{\displaystyle \operatorname{ave}}}%

\global\long\def\div{\text{div}}%

\global\long\def\erf{\text{erf}}%

\global\long\def\vvec{\text{vec}}%

\global\long\def\b#1{\bm{#1}}%

\global\long\def\r#1{\upgreek{#1}}%

\global\long\def\br#1{\bupgreek{\bm{#1}}}%

\global\long\def\w{\b w}%

\global\long\def\v{\b v}%

\global\long\def\wr{\br w}%

\global\long\def\z{\b z}%

\global\long\def\y{\b y}%

\global\long\def\yr{\r y}%

\global\long\def\x{\b x}%

\global\long\def\xr{\r x}%

\global\long\def\zr{\r z}%

\global\long\def\h{\b h}%

\global\long\def\hr{\r h}%

\global\long\def\u{\b u}%

\global\long\def\ur{\r u}%

\global\long\def\gr{\r g}%

\global\long\def\hr{\r h}%

\title{An Easy to Interpret Diagnostic for Approximate Inference: Symmetric
Divergence Over Simulations}

\maketitle
\maketitle
\begin{abstract}
It is important to estimate the errors of probabilistic inference
algorithms. Existing diagnostics for Markov chain Monte Carlo methods
assume inference is asymptotically exact, and are not appropriate
for approximate methods like variational inference or Laplace's method.
This paper introduces a diagnostic based on repeatedly simulating
datasets from the prior and performing inference on each. The central
observation is that it is possible to estimate a symmetric KL-divergence
defined over these simulations.

\end{abstract}

\section{Introduction\label{sec:Background}}

This paper considers the probabilistic inference problem. Given a
known distribution $p\pp{\z,\x}$ and observing some specific value
$\x$, one wishes to infer $\z$. (E.g. predict its mean or variance.)
Unless $p$ is simple, there is no simple form for the posterior $p\pp{\z\vert\x}$.
Many approximate methods exist, including variants of MCMC, message-passing,
Laplace's method, and variational inference (VI). All these methods
produce large errors on some problems. Thus, diagnostic techniques
are of high interest to understand when a given inference method will
perform well on a given problem.

For MCMC, there are several widely-used diagnostics. The potential
scale reduction factor $\hat{R}$ diagnostic \citep{Gelman_1992_InferenceIterativeSimulation}
runs multiple chains, and then compares within-chain and between-chain
variances. The expected sample size diagnostic considers correlations
in a single chain. Diagnostics of this type are an active research
area \citep{Vehtari_2020_Ranknormalizationfoldinglocalization}.

While successful, these diagnostics are grounded in the fact that
MCMC is asymptotically exact. That is, under mild conditions MCMC
will converge to the stationary distribution if run long enough. Informally,
diagnostics for MCMC only need to diagnose ``\emph{has the chain
converged?}'', rather than ``\emph{has it converged to the correct
distribution?}''.

For inference methods that are asymptotically approximate, different
diagnostics are needed. This paper is in the line of \emph{simulation-based}
diagnostics. These are a fairly radical departure. Rather than measuring
how well inference performs on the given $\x$, these estimate how
well inference performs \emph{on average} over data generated from
the model. These diagnostics repeatedly sample $\pp{\zr,\xr}\sim p\pp{\z,\x}$
and then do inference on the simulated $\xr$. The power of this approach
is that the true latent $\zr$ corresponding to the observed $\r x$
is known. 

To the best of our knowledge, this simulation-based approach was first
pursued by \citet{Cook_2006_ValidationSoftwareBayesian}, who sample
$\xr\sim p\pp{\x}$ and then perform inference to approximately sample
$\zr\sim p\pp{\z\vert\xr}.$ The quantiles of each component $\zr_{i}$
generated this way are compared to those generated directly from the
prior $p\pp{\z}.$ This can be done visually (looking at histograms),
or by using a Kolmogorov-Smirnov test. More recently, \citet{Yao_2018_YesDidIt}
suggest testing for symmetry. These error measures may not be appropriate
for all situations. First, these measures can be challenging to automate
or interpret, since they do not provide a scalar quantity but rather
a procedure to perform in each dimension. Second, there could be inference
errors not detected by looking at univariate distributions.

In this paper, we observe that some inference methods, such as Laplace's
method and variational inference (VI), do not simply give a set of
samples, but an approximate \emph{distribution} $q\pp{\z\vert\x}$.
This turns out to enable diagnostics that would be impossible with
MCMC.

Our central idea is simple. Suppose that on input $\x$, inference
returns a distribution $q\pp{\z|\x}.$ Define the joint distribution
$q\pp{\z,\x}=p\pp{\x}q\pp{\z\vert\x}.$ Then, our diagnostic is an
estimate of $\SKL{p\pp{\zr,\xr}}{q\pp{\zr,\xr}},$ the symmetric KL-divergence
between $p\pp{\z,\x}$ and $q\pp{\z,\x}$. This essentially measures
how far $q\pp{\z\vert\x}$ is from $p\pp{\z\vert\x}$ on \emph{average},
over simulated datasets.

The key observation is that the symmetric divergence induces cancellations
induces cancellations between among the unknown normalization terms.
Specifically, if $\pp{\zr,\r x}\sim p\pp{\zr,\r x}$ and $\tilde{\r z}\sim q\pp{\zr\vert\xr}$
then we can simulate
\[
\r d=\log\frac{p\pp{\r z,\r x}}{q\pp{\r z|\r x}}-\log\frac{p\pp{\tilde{\r z},\r x}}{q\pp{\tilde{\r z}|\r x}},
\]
 and the expected value of $\r d$ is the symmetric divergence. To
compute this diagnostic, one must: (1) simulate $\pp{\z,\x}\sim p\pp{\z,\x}$
and $\tilde{\z}\sim q\pp{\z\vert\xr}$ and (2) compute $p\pp{\z,\x}$
and $q\pp{\z\vert\x}$. We do \emph{not} need to be able to evaluate
$p\pp{\x},$ even though it is part of the definitions of $p\pp{\z,\x}$
and $q\pp{\z,\x}$.

We also show that this idea can be extended to situations with conditional
or hidden variables. As an example of the latter, we show that it
can be used with importance-weighted inference methods that generate
many samples $\z\sim q(\z|\x)$ and select then one according to the
importance weights $p(\z,\x)/q(\z|\x)$ \citep{Burda_2015_ImportanceWeightedAutoencoders}.
Experiments show that the diagnostic gives practical measures of performance,
for regular VI, for Laplace's method, and for importance-weighted
variants of both.

\subsection{Notation}

The KL-divergence is $\KL{q\pp{\zr}}{p\pp{\zr}}=\E_{q\pp{\zr}}\log\pars{q\pp{\zr}/p\pp{\zr}}$
. Sans-serif font marks random variables. This disambiguates conflicting
conventions in machine learning and information theory. $\KL{q\pp{\zr\vert\x}}{p\pp{\zr\vert\x}}=\E_{q\pp{\zr\vert\x}}\log\pars{q\pp{\zr\vert\x}/p\pp{\zr\vert\x}}$
is a divergence over $\zr$ for a fixed $\x$. Meanwhile, $\KL{q\pp{\zr\vert\xr}}{p\pp{\zr\vert\xr}}=\E_{q\pp{\zr,\xr}}\log\pars{q\pp{\zr\vert\xr}/p\pp{\zr\vert\xr}}$
is the \emph{conditional divergence} \citep{Cover_2006_Elementsinformationtheory},
with an expectation over both $\zr$ and $\xr$. In all cases, symmetric
divergences are defined as $\SKL qp=\KL qp+\KL pq$.

\section{A New Simulation-Based Diagnostic}

This section gives a novel simulation-based diagnostic based on the
symmetric KL-divergence. The key idea is that some inference methods
(e.g. VI or Laplace's method) do not just give approximate samples,
but an approximate \emph{distribution} that can be evaluated at any
point. This enables certain diagnostics that would be impossible with
just a set of samples. Again, let $p\pp{\z,\x}$ be the target. We
consider approximate inference methods that input some $\x$ and produce
a distribution over $\z$ that approximates $p\pp{\z\vert\x}.$ We
denote that approximation as $q\pp{\z\vert\x}.$

One might hope to use the KL divergence $\KL{q\pp{\zr\vert\x}}{p\pp{\zr\vert\x}}$
as a diagnostic. This is almost never tractable since $p\pp{\x}$
is unknown. One can instead compute the evidence lower bound (ELBO)
$\E_{q\pp{\zr\vert\x}}\log(q\pp{\zr\vert\x}/p\pp{\zr,\x})$ which
is equal to the KL-divergence plus $\log p\pp x.$ The ELBO precisely
measures the \emph{relative} error for different algorithms, but gives
little information about the \emph{absolute} error, since $p\pp{\x}$
is unknown.

Instead, our diagnostic is based on the symmetric KL-divergence. The
basic idea of the diagnostic is to define a joint distribution $q\pp{\z,\x}=p\pp{\x}q\pp{\z\vert\x}$
with the same distribution over $\x$ as $p\pp{\z\vert\x}.$ Then,
cancellations make it possible to estimate the joint symmetric divergence
between $p\pp{\z,\x}$ and $q\pp{\z,\x}$. This is formalized in the
following result. 

\begin{restatable}{thm}{mainresult}

Given $p\pp{\z,\x}$ and $q\pp{\z\vert\x},$ define $q\pp{\z,\x}=p\pp{\x}q\pp{\z\vert\x}$.
Then,\label{thm:main-result}
\begin{align*}
\SKL{q\pp{\r z,\r x}}{p\pp{\r z,\r x}} & =\E\bracs{\log\frac{p\pp{\r z,\r x}}{q\pp{\r z|\r x}}-\log\frac{p\pp{\tilde{\r z},\r x}}{q\pp{\tilde{\r z}|\r x}}},
\end{align*}
where $\pp{\zr,\xr}\sim p\pp{\z,\x}$ is sampled from the model distribution
and $\tilde{\r z}\sim q\pp{\z|\x}$ is sampled from the approximating
distribution.

\end{restatable}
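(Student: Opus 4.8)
The plan is to expand the symmetric divergence into its two directional pieces and then exploit the single structural fact built into the construction: by definition the $\x$-marginal of $q\pp{\z,\x}=p\pp{\x}q\pp{\z\vert\x}$ is exactly $p\pp{\x}$. Starting from $\SKL{q\pp{\zr,\xr}}{p\pp{\zr,\xr}}=\KL{p\pp{\zr,\xr}}{q\pp{\zr,\xr}}+\KL{q\pp{\zr,\xr}}{p\pp{\zr,\xr}}$, I would substitute $q\pp{\z,\x}=p\pp{\x}q\pp{\z\vert\x}$ into each term. The decisive observation is that this substitution makes $\log p\pp{\x}$ appear in both directional divergences, but with opposite signs, so that the intractable evidence is poised to cancel.

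First I would handle $\KL{p\pp{\zr,\xr}}{q\pp{\zr,\xr}}$. Because $\log\frac{p\pp{\z,\x}}{q\pp{\z,\x}}=\log\frac{p\pp{\z,\x}}{q\pp{\z\vert\x}}-\log p\pp{\x}$, taking the expectation over $\pp{\zr,\xr}\sim p\pp{\z,\x}$ yields $\E_{p}\bracs{\log\frac{p\pp{\zr,\xr}}{q\pp{\zr\vert\xr}}}-\E_{p\pp{\xr}}\bracs{\log p\pp{\xr}}$. Symmetrically, for $\KL{q\pp{\zr,\xr}}{p\pp{\zr,\xr}}$ the integrand is the negation, $\log p\pp{\x}-\log\frac{p\pp{\z,\x}}{q\pp{\z\vert\x}}$, but now the expectation is taken over $q\pp{\z,\x}=p\pp{\x}q\pp{\z\vert\x}$.

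The key step is then to observe that the two $\log p\pp{\x}$ contributions cancel: the expectation of $\log p\pp{\xr}$ under $q\pp{\z,\x}$ equals its expectation under $p\pp{\x}$, precisely because the $\x$-marginal of $q$ was defined to be $p\pp{\x}$. This is the cancellation that removes the unknown evidence $p\pp{\x}$ from the final expression, leaving $\E_{p}\bracs{\log\frac{p\pp{\zr,\xr}}{q\pp{\zr\vert\xr}}}-\E_{q}\bracs{\log\frac{p\pp{\zr,\xr}}{q\pp{\zr\vert\xr}}}$.

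Finally I would identify the two remaining expectations with the sampling scheme stated in the theorem. Sampling from $q\pp{\z,\x}=p\pp{\x}q\pp{\z\vert\x}$ is realized by drawing $\pp{\zr,\xr}\sim p\pp{\z,\x}$ (which already supplies $\xr\sim p\pp{\x}$) and then $\tilde{\zr}\sim q\pp{\z\vert\xr}$; rewriting the second expectation in terms of $\tilde{\zr}$ and $\xr$ and merging the two expectations over the shared randomness produces the claimed formula. The only point that requires genuine care — and the step I would watch most closely — is matching each expectation to its correct sampling distribution, since it is exactly the shared $\x$-marginal that licenses the cancellation of $\log p\pp{\x}$; the remaining manipulations are routine properties of logarithms and linearity of expectation.
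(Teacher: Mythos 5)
Your proof is correct and follows essentially the same route as the paper's: expand the symmetric divergence, substitute $q\pp{\z,\x}=p\pp{\x}q\pp{\z\vert\x}$, and cancel the $\log p\pp{\x}$ terms using the shared $\x$-marginal. The only cosmetic difference is that the paper performs this cancellation pointwise in $\x$ inside an outer expectation over $p\pp{\xr}$, whereas you cancel the two expectations $\E\log p\pp{\xr}$ against each other; both rest on the identical fact that $q\pp{\x}=p\pp{\x}$.
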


\noindent\parbox[t]{1\columnwidth}{%
\vspace{-1cm}%
\begin{minipage}[t]{0.49\textwidth}%
\noindent 
\begin{algorithm}[H]
For $k=1,2,\cdots,K$
\begin{itemize}
\item Simulate $\pp{\z,\x}\sim p\pp{\z,\x}$.
\item Infer $p\pp{\z,\x}$ to get $q(\z\vert\x)$. (fix $\x$)
\item Simulate $\tilde{\z}\sim q\pp{\z|\x}$. (fix $\x$)
\item ${\displaystyle d_{k}\leftarrow\log\frac{p\pp{\z,\x}}{q\pp{\z|\x}}-\log\frac{p\pp{\tilde{\z},\x}}{q\pp{\tilde{\z}|\x}}}$
\end{itemize}
Use $\mean(d_{1}\cdots d_{K})\approx\SKL{q\pp{\r z,\r x}}{p\pp{\r z,\r x}}.$

\vspace{0.35cm}

\caption{Computing the proposed diagnostic.\label{alg:simple-diagnostic}}
\end{algorithm}
\end{minipage}\hfill{}%
\begin{minipage}[t]{0.49\textwidth}%
\begin{algorithm}[H]
Input $\x$.

For $k=1,2,\cdots,K$
\begin{itemize}
\item Simulate $\pp{\z,\y}\sim p\pp{\z,\y\vert\x}$.
\item Infer $p\pp{\z,\y\vert\x}$ to get $q\pp{\z|\y,\x}$. (fix $\x,\y$)
\item Simulate $\tilde{\z}\sim q\pp{\z|\y,\x}$. ($\x,\y$ fixed)
\item ${\displaystyle d_{k}\leftarrow\log\frac{p\pp{\z,\y\vert\x}}{q\pp{\z|\y,\x}}-\log\frac{p\pp{\tilde{\z},\y\vert\x}}{q\pp{\tilde{\z}|\y,\x}}}$
\end{itemize}
Use $\mean(d_{1}\cdots d_{K})\approx\SKL{q\pp{\r z,\yr\vert\x}}{p\pp{\r z,\yr\vert\x}}.$

\caption{Diagnostic for conditional models.\label{alg:simple-diagnostic-conditional}}
\end{algorithm}
\end{minipage}%
}

Pseudocode for how one would use this result is given in \ref{alg:simple-diagnostic}.
One attractive aspect is that the output is the mean of a set of $K$
independent quantities. This makes it is easy to produce uncertainty
measures such as confidence intervals. These bound how far the estimated
diagnostic may be from the true symmetric divergence.

\subsection{Inference in Conditional Models}

Many inference problems are conditional, meaning one is given a model
$p\pp{\z,\y\vert\x}$, with no distribution specified over $\x$.
After observing $\x$ and $\y$, the goal is to predict $\z$. For
example, in regression or classification problems, $\x$ would represent
the input features, $\y$ the output values/labels, and $\z$ the
latent parameters.

In these cases, inference takes as input a pair $\pp{\y,\x}$ and
produces a distribution $q\pp{\z\vert\x,\y}$ approximating $p\pp{\z\vert\x,\y}$.
It's easy to see that the following generalization of \ref{thm:main-result}
holds. This is given by taking \ref{thm:main-result}, substituting
$\yr$ for $\xr$ and then conditioning all distributions on the fixed
value $\x$.
\begin{cor}
Given $p\pp{\z,\y\vert\x}$ and $q\pp{\z\vert\y,\x},$ define $q\pp{\z,\y\vert\x}=p\pp{\y\vert\x}q\pp{\z\vert\y,\x}$.
Then,\label{cor:main-result-conditional}
\begin{alignat*}{1}
\SKL{q\pp{\r z,\yr\vert\x}}{p\pp{\r z,\yr\vert\x}} & =\E\log\frac{p\pp{\r z,\yr\vert\x}}{q\pp{\r z|\yr,\x}}-\log\frac{p\pp{\tilde{\r z},\yr\vert\x}}{q\pp{\tilde{\r z}|\yr,\x}}
\end{alignat*}
where $\pp{\zr,\yr}\sim p\pp{\z,\y\vert\x}$ is sampled from the model
distribution and $\tilde{\r z}\sim q\pp{\z|\yr,\x}$ is sampled from
the approximating distribution.
\end{cor}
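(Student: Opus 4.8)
The plan is to reduce the corollary to \ref{thm:main-result} rather than re-derive it from scratch. Throughout, the feature value $\x$ is held fixed and plays no active role: every distribution in the statement is already conditioned on it. So the first step is to treat $\x$ as a constant parameter and read the triple $p\pp{\z,\y\vert\x}$, $q\pp{\z\vert\y,\x}$, and $q\pp{\z,\y\vert\x}=p\pp{\y\vert\x}q\pp{\z\vert\y,\x}$ as an instance of the theorem's triple $p\pp{\z,\x}$, $q\pp{\z\vert\x}$, $q\pp{\z,\x}=p\pp{\x}q\pp{\z\vert\x}$ under the substitution $\y\mapsto\x$. With this identification, every object in \ref{cor:main-result-conditional} matches an object in \ref{thm:main-result} verbatim, and the claimed identity is exactly the theorem's conclusion.

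To make the substitution legitimate I would verify the two structural facts that the theorem's argument relies on. First, the sampling recipe produces the joints I expect: from $\pp{\zr,\yr}\sim p\pp{\z,\y\vert\x}$ we get $\yr\sim p\pp{\y\vert\x}$, and drawing $\tilde{\zr}\sim q\pp{\z\vert\yr,\x}$ makes the pair $\pp{\tilde{\zr},\yr}$ distributed as $p\pp{\y\vert\x}q\pp{\tilde{\z}\vert\y,\x}=q\pp{\tilde{\z},\y\vert\x}$, exactly the $q$-joint defined in the statement. Second, by construction the $\y$-marginal of this joint agrees with the model, since $q\pp{\y\vert\x}=\int q\pp{\z,\y\vert\x}\,d\z=p\pp{\y\vert\x}$.

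With those facts in hand, the mechanism is the same cancellation that powers the theorem. Substituting $q\pp{\z\vert\y,\x}=q\pp{\z,\y\vert\x}/p\pp{\y\vert\x}$ inside each logarithm turns the first expectation, over $p$, into $\KL{p\pp{\zr,\yr\vert\x}}{q\pp{\zr,\yr\vert\x}}+\E_{p}\log p\pp{\yr\vert\x}$, and the second expectation, over $q$, into $-\KL{q\pp{\zr,\yr\vert\x}}{p\pp{\zr,\yr\vert\x}}+\E_{q}\log p\pp{\yr\vert\x}$. Because $q\pp{\y\vert\x}=p\pp{\y\vert\x}$, the terms $\E_{p}\log p\pp{\yr\vert\x}$ and $\E_{q}\log p\pp{\yr\vert\x}$ are equal and cancel once the second expectation is subtracted, leaving precisely $\KL{p\pp{\zr,\yr\vert\x}}{q\pp{\zr,\yr\vert\x}}+\KL{q\pp{\zr,\yr\vert\x}}{p\pp{\zr,\yr\vert\x}}=\SKL{q\pp{\zr,\yr\vert\x}}{p\pp{\zr,\yr\vert\x}}$.

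The main obstacle is not any single computation but the bookkeeping of keeping $\x$ fixed while confirming that each conditional expectation is genuinely over the joint I claim — in particular that the second term's expectation is taken over $q\pp{\tilde{\z},\y\vert\x}$ and not some other pairing of $\tilde{\zr}$ with $\yr$. Once the sampling distributions and the shared marginal $q\pp{\y\vert\x}=p\pp{\y\vert\x}$ are pinned down, the rest is the substitution $\y\mapsto\x$ and the one-line cancellation above.
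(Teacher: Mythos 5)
Your proof is correct and takes essentially the same route as the paper, which justifies \ref{cor:main-result-conditional} precisely by substituting $\y$ for $\x$ in \ref{thm:main-result} and conditioning every distribution on the fixed $\x$. The cancellation of the $\E\log p\pp{\yr\vert\x}$ terms that you spell out is just the theorem's own proof transcribed into the conditional setting, so nothing is genuinely different.
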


Pseudocode for how the diagnostic would be used with conditional models
is given as \ref{alg:simple-diagnostic-conditional}. It's critical
that $\x$ is \emph{not} a random variable-- it is the actual observed
input data. The simulated latent variables $\z$ and datasets $\y$
are conditioned on $\x$.

In order to run use this algorithm, one must be able to perform the
following operations: (1) Simulate $\pp{\zr,\yr}\sim p\pp{\z,\y\vert\x}$
and $\tilde{\zr}\sim q\pp{\z\vert\yr,\x}$. (2) Compute $p\pp{\z,\y\vert\x}$
for a given $\x$, $\y$, and $\z$. (3) Compute $q\pp{\z\vert\x}$
for a given $\x$, $\y$, and $\z$. It is not necessary to be able
to evaluate $p\pp{\y\vert\x},$ despite the fact that it is part of
the definition of $q\pp{\z,\x}$. 

\textbf{Example Results}. Before moving on to more complex cases,
we give some examples of the use of this diagnostic. \ref{fig:diagnostic-vi-laplace}
shows an example of running the diagnostic on five example models
using variational inference (VI) and Laplace's method that maximizes
$\log p\pp{\z,\x}$ to get $\hat{z}$ and uses a Gaussian centered
at $\hat{z}$ with a covariance matching the Hessian $H$ of $\log p$.
We also compare to an ``adjusted'' Laplace's method that better
matches the curvature if $\hat{z}$ is only an approximate maxima.
This instead uses a mean of $H^{-1}g$ where $g$ is that gradient
of $\log p$ at $\hat{z}.$ A more full description of the models
and inference algorithms is given in \ref{sec:Experiments}.

\begin{figure*}[t]
\begin{centering}
\includegraphics[scale=0.3]{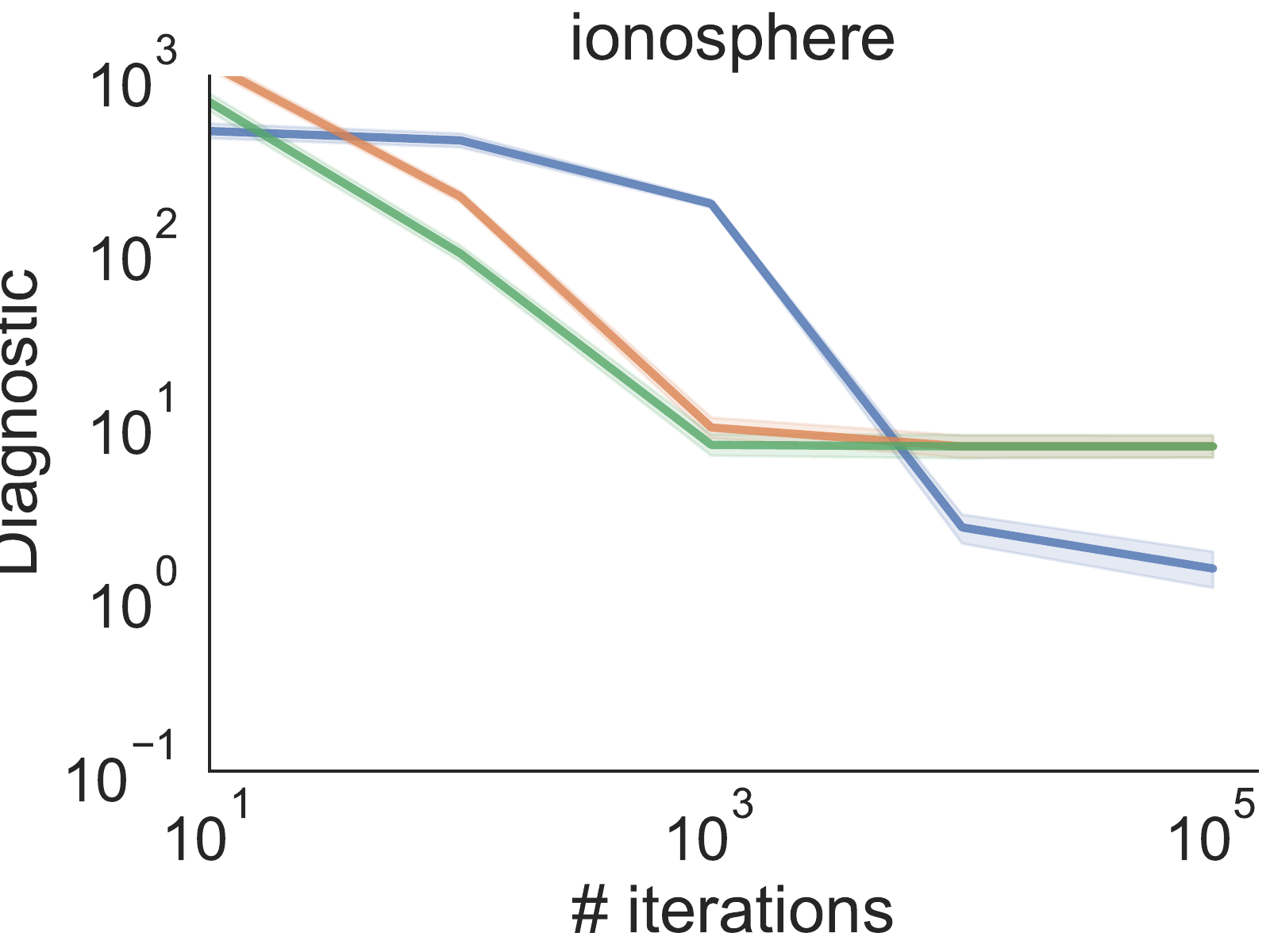}\includegraphics[viewport=32.27bp 0bp 461bp 346bp,clip,scale=0.3]{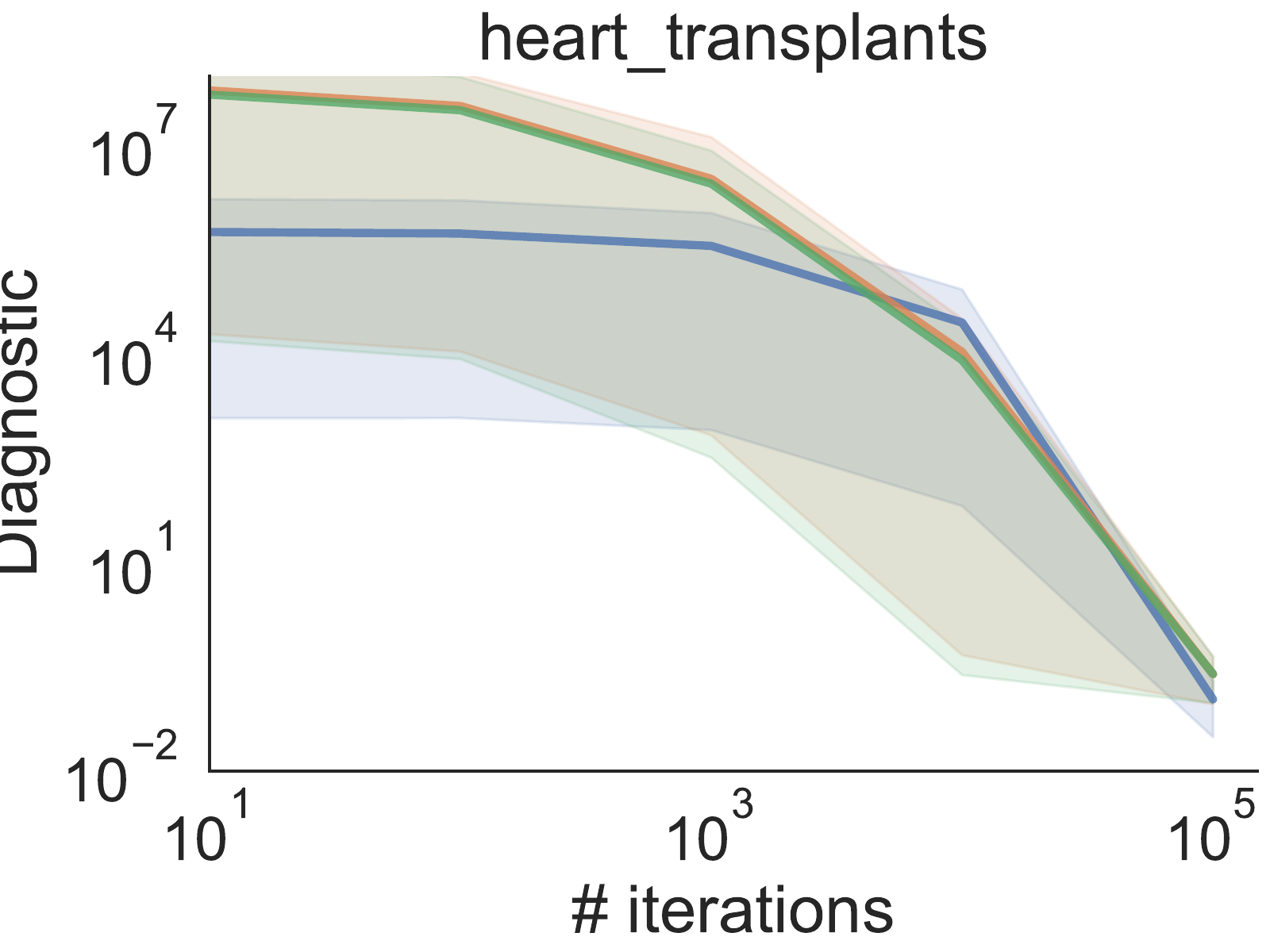}\includegraphics[viewport=32.27bp 0bp 461bp 346bp,clip,scale=0.3]{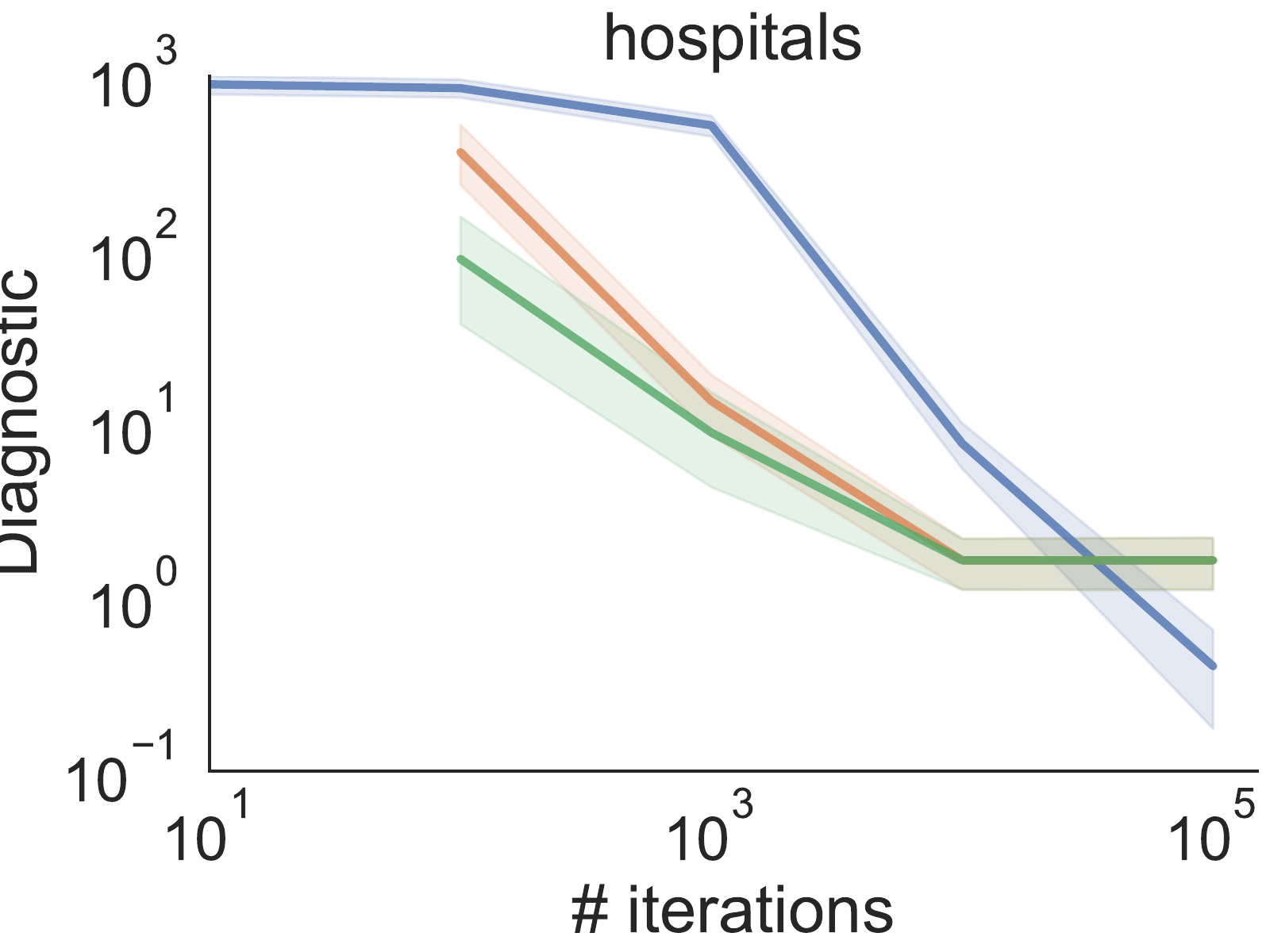}
\par\end{centering}
\begin{centering}
\includegraphics[scale=0.3]{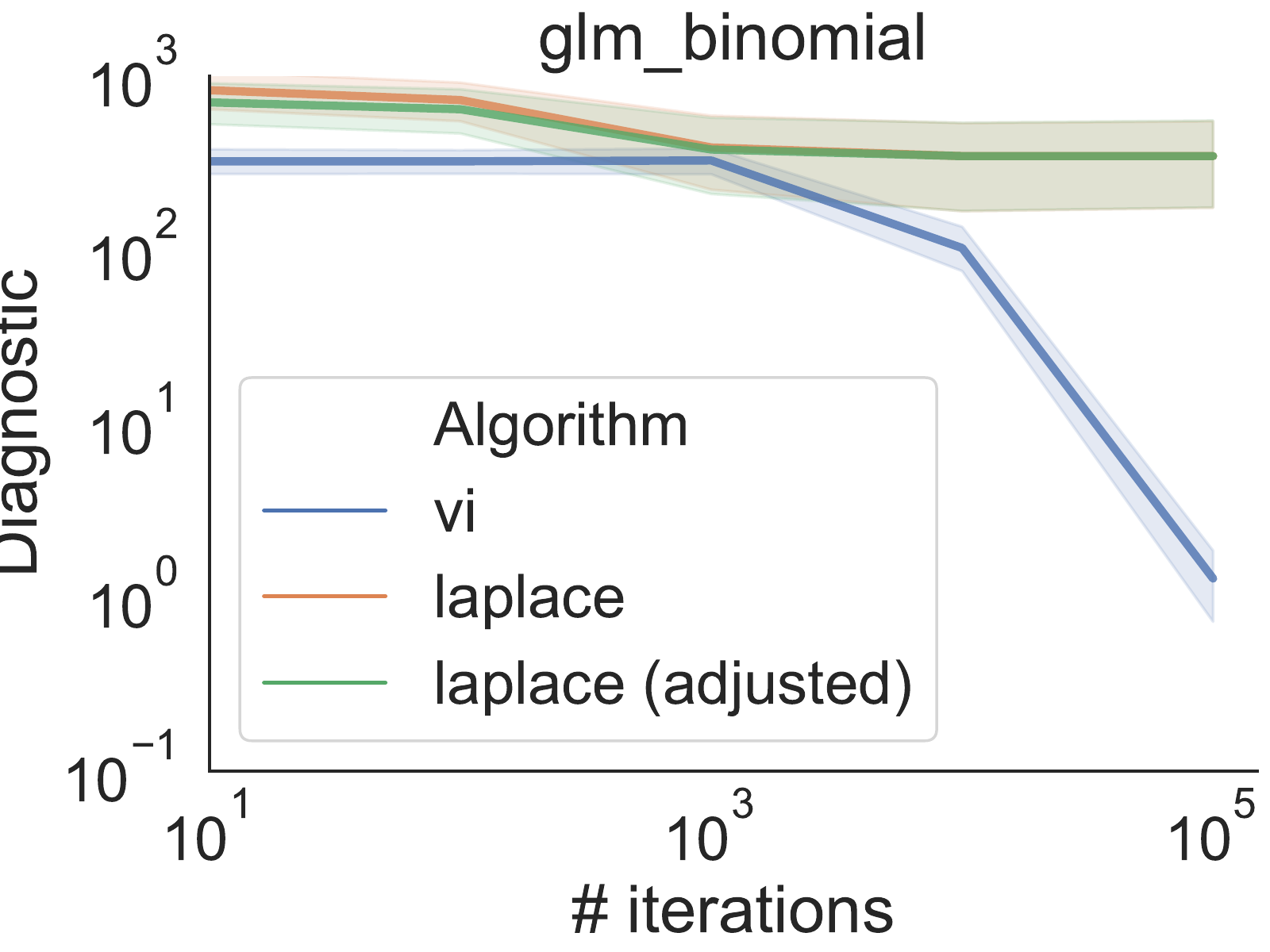}\includegraphics[viewport=27.66bp 0bp 461bp 346bp,clip,scale=0.3]{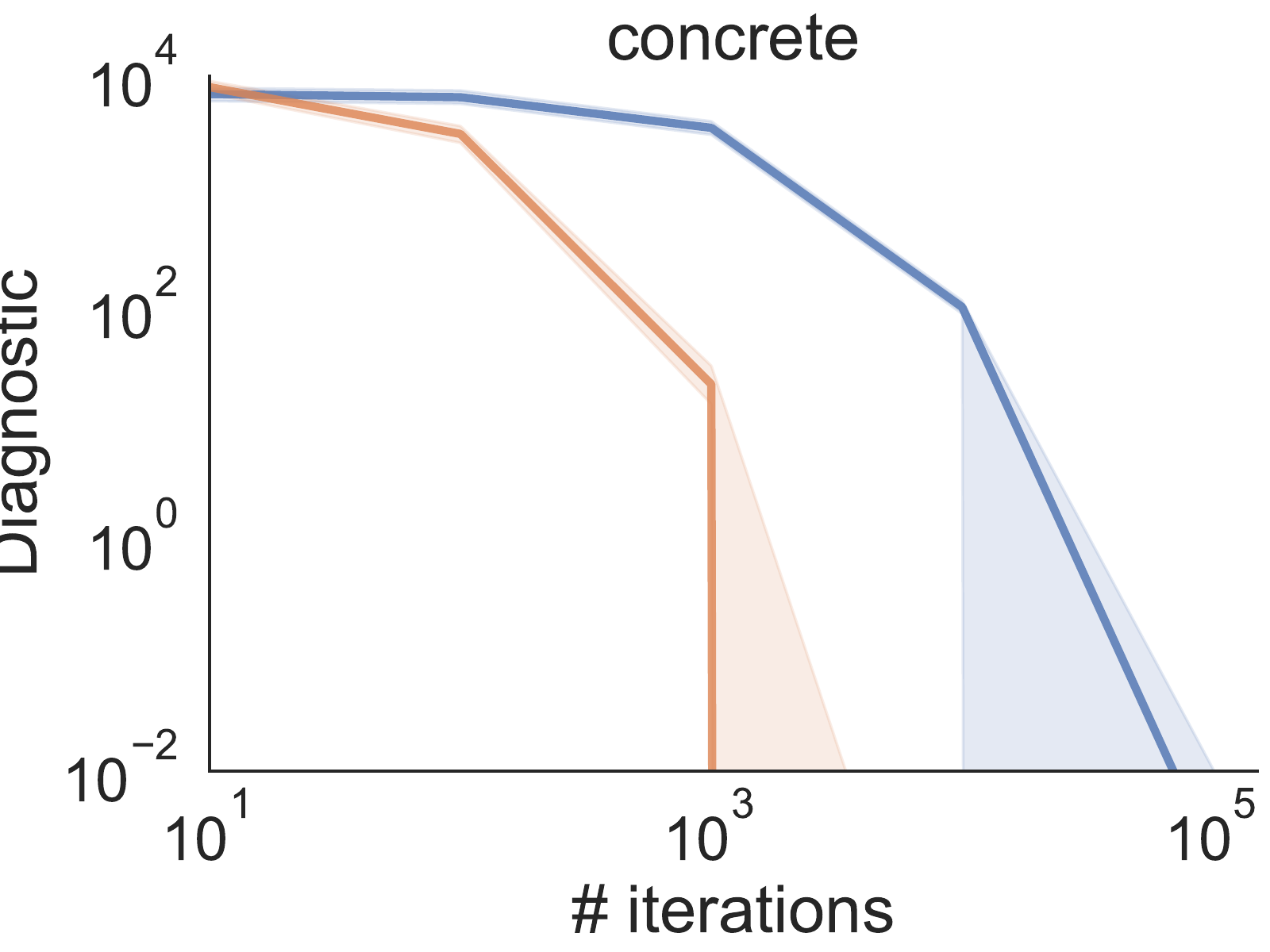}
\par\end{centering}
\caption{\textbf{The diagnostic gives plausible measures of the value of different
inference algorithms and of optimizing for different numbers of iterations\@.}
The diagnostic computed with $K=100$ repetitions. Lines show the
mean while the colored areas show 95\% confidence intervals. Confidence
intervals are computed before the log-transform and therefore appear
large for the lower-bounds seen on \texttt{concrete}. Laplace's method
fails due to numerical problems with few iterations on \texttt{hospitals}.
Adjusted Laplace's method is exact for \texttt{concrete}. \label{fig:diagnostic-vi-laplace}}
\end{figure*}

\section{Inference with Augmented Variables}

Many approximate inference methods used the idea of \emph{augmentation.}
The idea is to create an extra variable $\h$ and then approximate
$p\pp{\z,\h\vert\x}$ with $q\pp{\z,\h\vert\x}$. Why would this be
useful? The basic reason is that many powerful approximating families
are obtained by integrating out other random variables. Such families
often do not have tractable densities $q\pp{\z\vert\x}$, but can
be represented as the marginal of some density $q\pp{\z,\h\vert\x}.$
If we choose $p\pp{\h\vert\z,\x}$ in a way that is ``easy'' for
$q$ to match, then augmented inference might be nearly as accurate
as directly approximating $p\pp{\z\vert\x}$ with $q\pp{\z\vert\x}.$
\citet{Agakov_2004_AuxiliaryVariationalMethod} introduced the idea
of auxiliary variational inference, which fits this form. 

\noindent To apply the diagnostic to inference with hidden variables,
we need another version of \ref{thm:main-result}. This can be proven
by taking \ref{thm:main-result} and substituting $\pp{\z,\h}$ for
$\z$.
\begin{cor}
Given $p\pp{\z,\h,\x}$ and $q\pp{\z,\h\vert\x},$ define $q\pp{\z,\h,\x}=p\pp{\x}q\pp{\z,\h\vert\x}.$
Then\label{cor:main-result-hidden}
\begin{alignat*}{1}
\SKL{q\pp{\r z,\r h,\r x}}{p\pp{\r z,\r h,\r x}} & =\E\log\frac{p\pp{\r z,\r h,\r x}}{q\pp{\r z,\r h|\r x}}-\log\frac{p\pp{\tilde{\r z},\tilde{\r h},\r x}}{q\pp{\tilde{\r z},\tilde{\r h}|\r x}},
\end{alignat*}
where $\pp{\r z,\r x}\sim p\pp{\z,\x}$ is sampled from the model
distribution, $\r h\sim p\pp{\h|\zr,\xr}$ is sampled from the augmenting
distribution, and $\pp{\tilde{\r z},\tilde{\r h}}\sim q\pp{\z,\h|\xr}$
is sampled from the approximating distribution.
\end{cor}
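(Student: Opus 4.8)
The plan is to obtain the corollary as an immediate specialization of \ref{thm:main-result} under a relabeling of variables, exactly as the surrounding text indicates. First I would treat the pair $\pp{\z,\h}$ as a single composite latent variable playing the role of $\z$ in the theorem. Under this identification the model density $p\pp{\z,\h,\x}$ is precisely the joint density over the composite latent and $\x$, the approximating conditional $q\pp{\z,\h|\x}$ is precisely the corresponding conditional, and the definition $q\pp{\z,\h,\x}=p\pp{\x}q\pp{\z,\h|\x}$ stated in the corollary coincides with the definition $q\pp{\z,\x}=p\pp{\x}q\pp{\z|\x}$ required by the theorem. The point that licenses the substitution is that the proof of \ref{thm:main-result} never uses any structure of the latent variable---its dimension, or whether it is a scalar, a vector, or a tuple---so the theorem applies verbatim with $\pp{\z,\h}$ in place of $\z$.

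Applying the theorem then yields directly
\[
\SKL{q\pp{\zr,\hr,\xr}}{p\pp{\zr,\hr,\xr}}
=\E\bracs{\log\frac{p\pp{\zr,\hr,\xr}}{q\pp{\zr,\hr|\xr}}-\log\frac{p\pp{\tilde{\zr},\tilde{\hr},\xr}}{q\pp{\tilde{\zr},\tilde{\hr}|\xr}}},
\]
which is the claimed identity. The remaining work is only to verify that the sampling prescriptions match. The theorem draws the composite pair from the model, $\pp{\zr,\hr,\xr}\sim p\pp{\z,\h,\x}$; by the chain rule $p\pp{\z,\h,\x}=p\pp{\z,\x}\,p\pp{\h|\z,\x}$ this is the same as first drawing $\pp{\zr,\xr}\sim p\pp{\z,\x}$ and then $\hr\sim p\pp{\h|\zr,\xr}$, which is exactly the two-stage sampling stated in the corollary. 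Likewise the theorem draws the tilded composite from the approximation, $\pp{\tilde{\zr},\tilde{\hr}}\sim q\pp{\z,\h|\xr}$, again matching. Hence the relabeled theorem statement is identical to the corollary.

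Since the entire substance lives in the already-established \ref{thm:main-result}, I do not expect a genuine obstacle; the one thing deserving care is the bookkeeping of the sampling distributions. In particular I would make explicit that $\xr$ is the \emph{same} realized value in both logarithmic terms---drawn once from the marginal $p\pp{\x}$---and that $\pp{\tilde{\zr},\tilde{\hr}}$ is drawn conditionally on that $\xr$ and independently of $\pp{\zr,\hr}$, so that $\pp{\tilde{\zr},\tilde{\hr},\xr}\sim q\pp{\z,\h,\x}$ under the definition $q\pp{\z,\h,\x}=p\pp{\x}q\pp{\z,\h|\x}$. I would also note, for completeness, the implicit regularity conditions inherited from the theorem: that all densities are taken with respect to a common dominating product measure and that the symmetric divergence (equivalently, the right-hand expectation) is finite, so the cancellation of the intractable $\log p\pp{\x}$ terms is valid. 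As a sanity check one could instead expand $\SKL{q\pp{\z,\h,\x}}{p\pp{\z,\h,\x}}$ directly and watch the $p\pp{\x}$ factors cancel as in the theorem's proof, but the relabeling argument is cleaner and avoids repeating that computation.
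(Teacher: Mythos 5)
Your proof is correct and follows exactly the route the paper takes: the paper proves this corollary by the one-line remark that one substitutes $\pp{\z,\h}$ for $\z$ in \ref{thm:main-result}, and your composite-latent-variable relabeling together with the chain-rule factorization $p\pp{\z,\h,\x}=p\pp{\z,\x}p\pp{\h\vert\z,\x}$ is precisely that argument, spelled out with the sampling bookkeeping made explicit.
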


The diagnostic is useful because (by the chain rule of KL-divergence),
\[
\SKL{q\pp{\r z,\r h,\r x}}{p\pp{\r z,\r h,\r x}}=\SKL{q\pp{\r z,\r x}}{p\pp{\r z,\r x}}+\SKL{q\pp{\r h\vert\r z,\r x}}{p\pp{\hr\vert\r z,\r x}},
\]
i.e. the diagnostic yields an upper bound on the error of $q\pp{\z,\x}.$ 

The corresponding algorithm is given as \ref{alg:diagnostic-hidden}.
This shows an important computational constraint. We assume that $p\pp{\z,\x}$
is given as a ``black box''. Thus, the augmented $p\pp{\z,\h,\x}$
must be chosen so that it is tractable to simulate $p\pp{\h\vert\z,\x}.$
The next section will give a special case of this result for a certain
class of approximate augmented inference methods.

\noindent\parbox[t]{1\columnwidth}{%
\vspace{-.5cm}

\begin{minipage}[t]{0.49\textwidth}%
\noindent 
\begin{algorithm}[H]
For $k=1,2,\cdots,K$:
\begin{itemize}
\item Simulate $\pp{\z,\x}\sim p\pp{\z,\x}$.
\item Infer $p\pp{\z,\x}p\pp{\b h|\z,\x}$ to get $q\pp{\z,\b h|\x}$. (fix
$\x$) 
\item Simulate $\b h\sim p\pp{\b h|\z,\x}.$
\item Simulate $\pp{\tilde{\z},\tilde{\b h}}\sim q\pp{\z,\b h|\x}$. (fix
$\x$)
\item ${\displaystyle d_{k}\leftarrow\log\frac{p\pp{\z,\b h,\x}}{q\pp{\z,\b h|\x}}-\log\frac{p\pp{\tilde{\z},\tilde{\b h},\x}}{q\pp{\tilde{\z},\tilde{\b h}|\x}}}$
\end{itemize}
Use $\mean(d_{1}\cdots d_{K})\approx\SKL{q\pp{\r z,\r h,\r x}}{p\pp{\r z,\r h,\r x}}.$\vspace{0.48cm}

\caption{Diagnostic with augmentation.\label{alg:diagnostic-hidden}}
\end{algorithm}
\end{minipage}\hfill{}%
\begin{minipage}[t]{0.49\textwidth}%
\noindent 
\begin{algorithm}[H]
For $k=1,2,\cdots,K$:
\begin{itemize}
\item Simulate $\pp{\z_{1},\x}\sim p\pp{\z,\x}$.
\item Run inference to find a base distribution $q\pp{\z\vert\x}$.
\item Simulate $\z_{2},\cdots,\z_{M}\sim q\pp{\z\vert\x}.$
\item Simulate $\tilde{\z}_{1},\cdots,\tilde{\z}_{M}\sim q\pp{\z|\x}$.
(fix $\x$)
\item ${\displaystyle d_{k}\leftarrow\log\sum_{m=1}^{M}\frac{p\pp{\z_{m},\x}}{q\pp{\z_{m}\vert\x}}}$${\displaystyle -\log\sum_{m=1}^{M}\frac{p\pp{\tilde{\z}_{m},\x}}{q\pp{\tilde{\z}_{m}\vert\x}}}$
\end{itemize}
Use $\mean(d_{1}\cdots d_{K})\approx$

\hfill{}$\SKL{q_{IW}\pp{\r z_{1}\cdots\zr_{M},\r x}}{p_{IW}\pp{\r z_{1}\cdots\zr_{M},\r x}}.$

\caption{With importance-weighting.\label{alg:diagnostic-IW}}
\end{algorithm}
\end{minipage}%
}

\section{Importance Sampling}

Self-normalized importance sampling is a classic Monte-Carlo method
\citep{Owen_2013_MonteCarlotheory}. Given any distribution $q\pp{\z\vert\x}$,
one can approximately sample from the posterior $p\pp{\z\vert\x}$
by drawing a set of $M$ samples $\hat{\z}_{1},\cdots,\hat{\z}_{M}\sim q\pp{\z\vert\x}$,
selecting one with probability proportional to the importance weights
$p\pp{\hat{\z}_{m},\x}/q\pp{\hat{\z}_{m}}$ and then returning the
final sample $\z_{1}=\hat{\z}_{m}.$ Call the resulting density $q_{IW}\pp{\z_{1}\vert\x}.$

One might hope to directly apply the diagnostic to self-normalizing
importance sampling. However, this cannot be done because it is intractable
to to evaluate $q_{IW}\pp{\z_{1}\vert\x}.$ However, we can identify
\emph{augmented} distributions, and thereby upper-bound the symmetric
divergence. Define the distribution

\begin{equation}
p_{IW}\pp{\z_{1},\cdots,\z_{M},\x}=p\pp{\z_{1},\x}\prod_{m=2}^{M}q\pp{\z_{m}\vert\x}.\label{eq:p_IW}
\end{equation}
This can be seen as an augmented distribution with $\z_{2},\cdots,\z_{M}$
the hidden variables augmenting the original $\z_{1}.$ Define also
\begin{equation}
q_{IW}\pp{\z_{1},\cdots,\z_{M}\vert\x}=\frac{p_{IW}\pp{\z_{1},\cdots,\z_{M},\x}}{\frac{1}{M}\sum_{m=1}^{M}\frac{p\pp{\z_{m},\x}}{q\pp{\z_{m}\vert\x}}}.\label{eq:q_IW}
\end{equation}

It is not immediately obvious that this augments the self-normalized
importance sampling density $q_{IW}\pp{\z_{1}\vert\x}$ introduced
at the beginning of this section (or indeed that this is a valid density
at all). However, it was recently shown \citep{Domke_2018_ImportanceWeightingVariational}
that the following algorithm samples from $q_{IW}.$
\begin{claim}
The following process yields a sample from $q_{IW}\pp{\z_{1},\cdots,\z_{M}\vert\x}$
as defined in \ref{eq:q_IW}.
\end{claim}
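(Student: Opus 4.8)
The plan is to compute the probability density of the tuple $\pp{\z_1,\cdots,\z_M}$ produced by the process and verify that it coincides with the right-hand side of \ref{eq:q_IW}. First I would write the process out explicitly: draw proposals $\hat{\z}_1,\cdots,\hat{\z}_M\sim q\pp{\z\vert\x}$ independently, let $w\pp{\z}=p\pp{\z,\x}/q\pp{\z\vert\x}$ denote the importance weight (with $\x$ held fixed), select an index $J$ with $\P\pp{J=j\mid\hat{\z}_1,\cdots,\hat{\z}_M}=w\pp{\hat{\z}_j}/\sum_{k}w\pp{\hat{\z}_k}$, and then relabel so that $\z_1=\hat{\z}_J$ is the selected sample while $\z_2,\cdots,\z_M$ are the remaining proposals kept in their original relative order. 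The joint density of the pair $\pp{\hat{\z}_1,\cdots,\hat{\z}_M,J}$ is then simply $\bigl(\prod_{m}q\pp{\hat{\z}_m\vert\x}\bigr)\,w\pp{\hat{\z}_J}/\sum_{k}w\pp{\hat{\z}_k}$, the product of the i.i.d. proposal density and the discrete selection probability.

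Next I would marginalize out the selection index by a change-of-variables argument. For a fixed ordered output $\pp{\z_1,\cdots,\z_M}$, the preimages under the relabeling map are indexed by the position $J=j$ that the selected sample $\z_1$ occupied among the proposals: for each $j$ there is exactly one proposal configuration, obtained by placing $\z_1$ at position $j$ and filling the other positions with $\z_2,\cdots,\z_M$ in order. Hence the relabeling map is $M$-to-one, and the density of the output is the sum of the $M$ corresponding contributions.

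The key observation is that each of these $M$ preimages contributes the \emph{same} density. Both the proposal product $\prod_{m}q\pp{\hat{\z}_m\vert\x}$ and the normalizer $\sum_{k}w\pp{\hat{\z}_k}$ depend only on the multiset of sampled values and therefore equal $\prod_{m}q\pp{\z_m\vert\x}$ and $\sum_{m}w\pp{\z_m}$ respectively, while the selected weight is $w\pp{\hat{\z}_J}=w\pp{\z_1}$ irrespective of $j$. Summing $M$ identical terms gives $M\,\prod_{m}q\pp{\z_m\vert\x}\,w\pp{\z_1}/\sum_{m}w\pp{\z_m}$, and the combinatorial factor $M$ cancels against the $1/M$ in the average weight. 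Finally, rewriting $w\pp{\z_1}\prod_{m}q\pp{\z_m\vert\x}=p\pp{\z_1,\x}\prod_{m=2}^{M}q\pp{\z_m\vert\x}=p_{IW}\pp{\z_1,\cdots,\z_M,\x}$ turns this into $p_{IW}\pp{\z_1,\cdots,\z_M,\x}/\bigl(\tfrac{1}{M}\sum_{m}p\pp{\z_m,\x}/q\pp{\z_m\vert\x}\bigr)$, which is exactly $q_{IW}$ as defined in \ref{eq:q_IW}.

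I expect the main obstacle to be the bookkeeping in the two middle steps: correctly establishing that the relabeling map is $M$-to-one and that the permutation symmetry of $\prod_{m}q$ and of $\sum_{k}w$ forces every preimage to contribute identically, so that the factor $M$ precisely cancels the $1/M$ normalization. A useful sanity check to include is that $q_{IW}$ integrates to one, which follows from the same symmetry since $\E_{q}\bracs{w\pp{\z_1}/\bigl(\tfrac{1}{M}\sum_{m}w\pp{\z_m}\bigr)}=1$ by exchangeability of the proposals; this also settles the concern, noted in the text, that it was not obvious \ref{eq:q_IW} defines a valid density at all.
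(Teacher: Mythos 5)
Your proof is correct. Note that the paper itself does not prove this claim anywhere --- it defers entirely to the citation of \citet{Domke_2018_ImportanceWeightingVariational} --- so there is no in-paper argument to compare against; what you have written is essentially the standard derivation from that reference, carried out correctly. The three steps all check out: the joint density of the proposals together with the selected index $J$ is $\bigl(\prod_{m}q\pp{\hat{\z}_{m}\vert\x}\bigr)w\pp{\hat{\z}_{J}}/\sum_{k}w\pp{\hat{\z}_{k}}$; the relabeling map that moves position $J$ to the front is a coordinate permutation (Jacobian one) and is $M$-to-one onto each output tuple, with one preimage per value of $J$; and the permutation invariance of $\prod_{m}q$ and of $\sum_{k}w$ makes all $M$ contributions identical, so the factor of $M$ cancels the $1/M$ in the denominator of \ref{eq:q_IW}. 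Your closing sanity check that $q_{IW}$ integrates to one via exchangeability is a nice touch, since the paper explicitly flags that the validity of \ref{eq:q_IW} as a density is not obvious. One small point of rigor you could tighten: rather than speaking of ``preimages'' (which implicitly assumes the sampled values are a.s.\ distinct), it is cleaner to compute $\E\bracs{f\pp{\z_{1},\cdots,\z_{M}}}$ for an arbitrary test function $f$ by summing over $J=j$, changing variables under the $j$-th permutation in each term, and reading off the density from the resulting integral; this yields the same expression and is valid even when ties have positive probability.
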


\begin{enumerate}
\item Draw $\hat{\zr}_{1},\cdots\hat{\zr}_{M}\sim q\pp{\z\vert\x}$.
\item Choose $m\in\left\{ 1,\cdots,M\right\} $ with $\text{\ensuremath{\P}}\bb m\propto\frac{p\pp{\hat{\zr}_{m},\x}}{q\pp{\hat{\zr}_{m}}}$
\item Set $\left(\zr_{1},\cdots,\zr_{M}\right)=\left(\hat{\zr}_{m},\hat{\zr}_{1},\cdots,\hat{\zr}_{m-1},\hat{\zr}_{m+1},\cdots,\hat{\zr}_{M}\right)$
\end{enumerate}
Informally, this algorithm draws $m$ samples from $q(\z\vert\x)$
then swaps one to be in the first position, chosen according to importance
weights. Thus, this is a valid augmentation of the self-normalized
importance sampling distribution.

This shows that \ref{eq:p_IW} and \ref{eq:q_IW} augment the target
and self-normalized importance-sampling density, and thus that $\SKL{q_{IW}\pp{\zr_{1},\cdots,\zr_{M},\xr}}{p_{IW}\pp{\zr_{1},\cdots,\zr_{M},\xr}}$
upper-bounds $\SKL{q_{IW}\pp{\zr_{1},\xr}}{p\pp{\zr_{1},\xr}}.$ It
can be shown that the (non-symmetric) divergence from $q_{IW}$ to
$p_{IW}$ asymptotically decreases at a $1/M$ rate \citep{Maddison_2017_FilteringVariationalObjectives,Domke_2018_ImportanceWeightingVariational}.

\begin{figure*}[t]
\begin{centering}
\includegraphics[viewport=0bp 0bp 461bp 346bp,clip,scale=0.3]{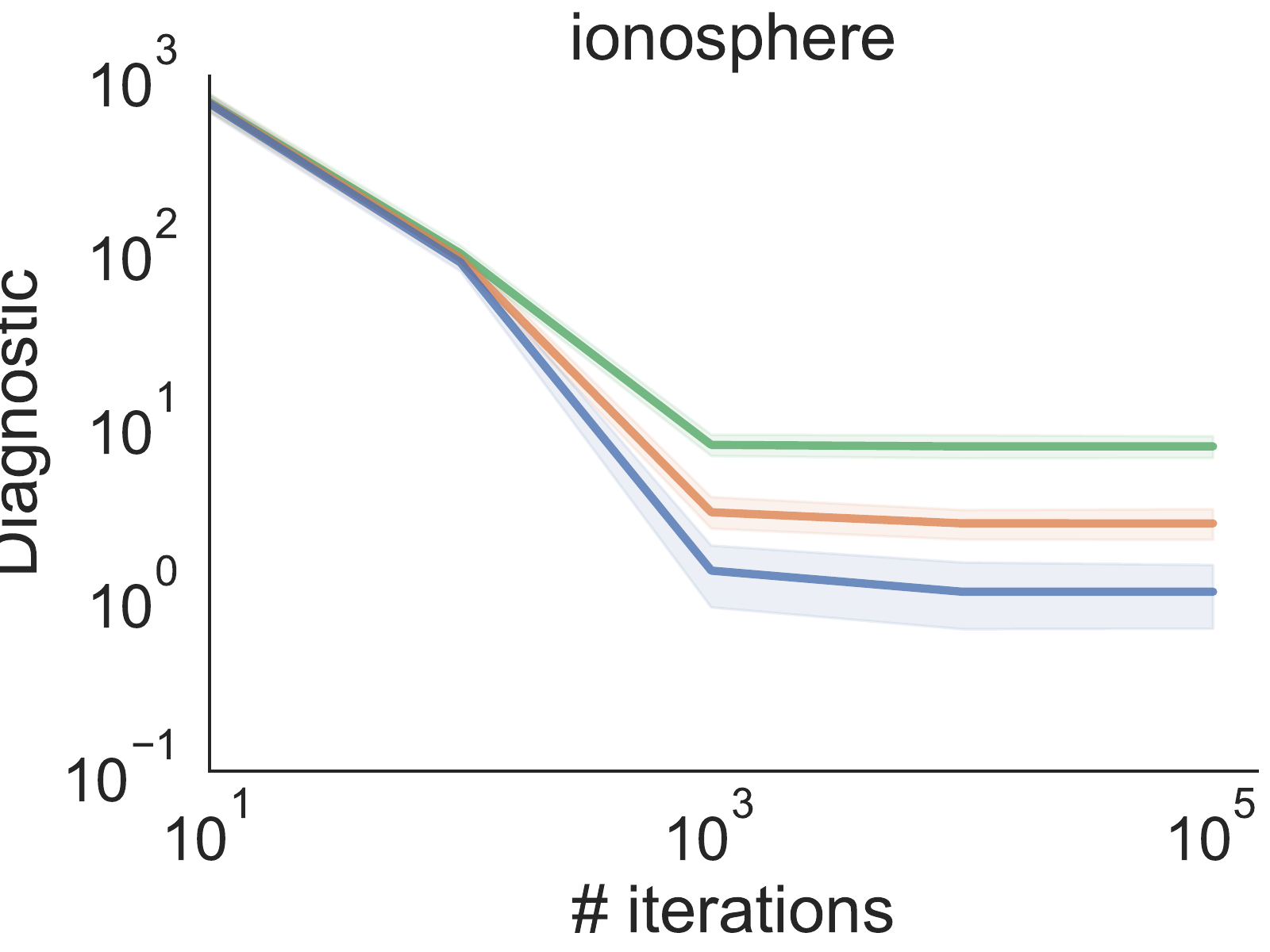}\includegraphics[viewport=32.27bp 0bp 461bp 346bp,clip,scale=0.3]{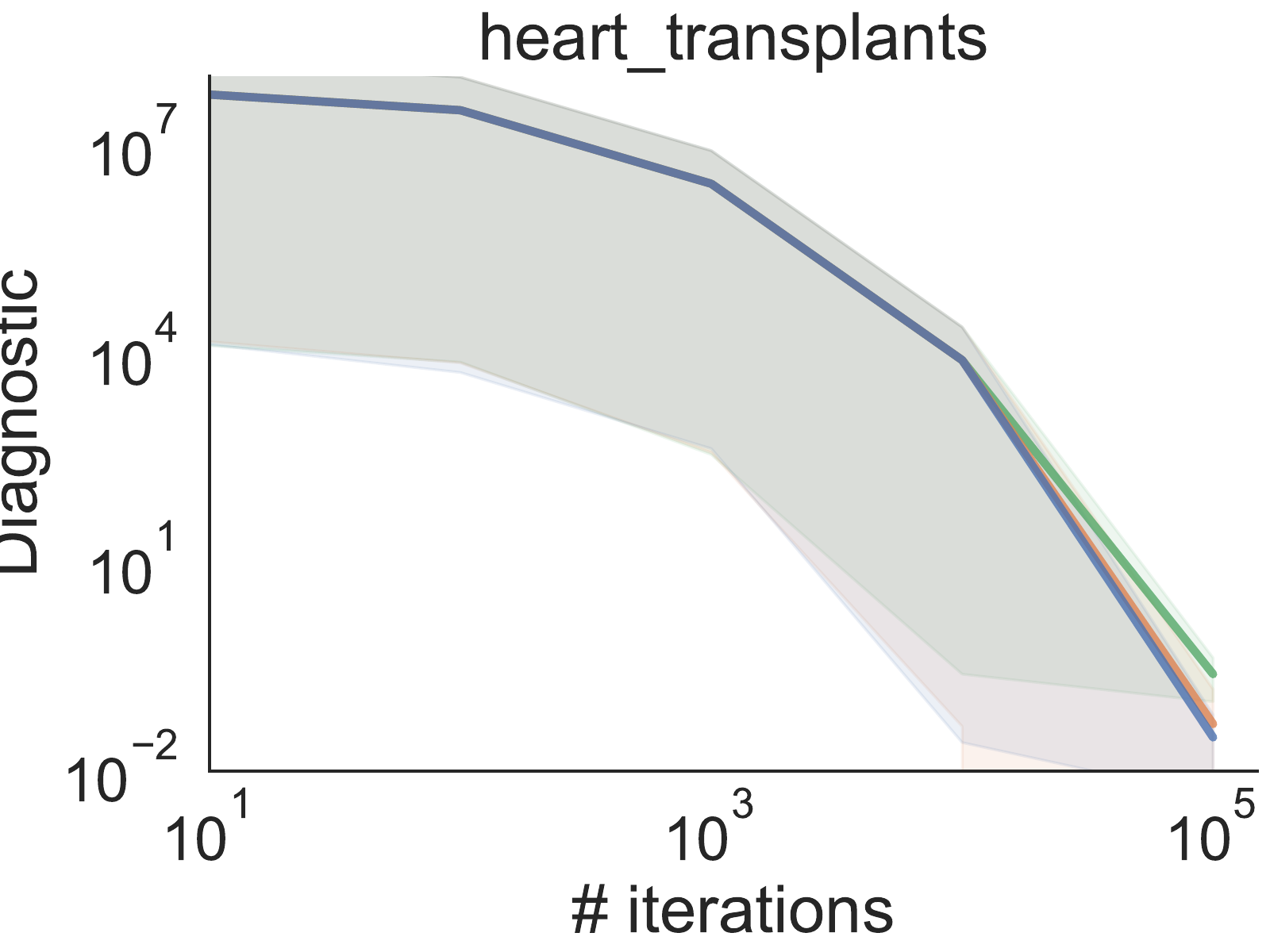}\includegraphics[viewport=32.27bp 0bp 461bp 346bp,clip,scale=0.3]{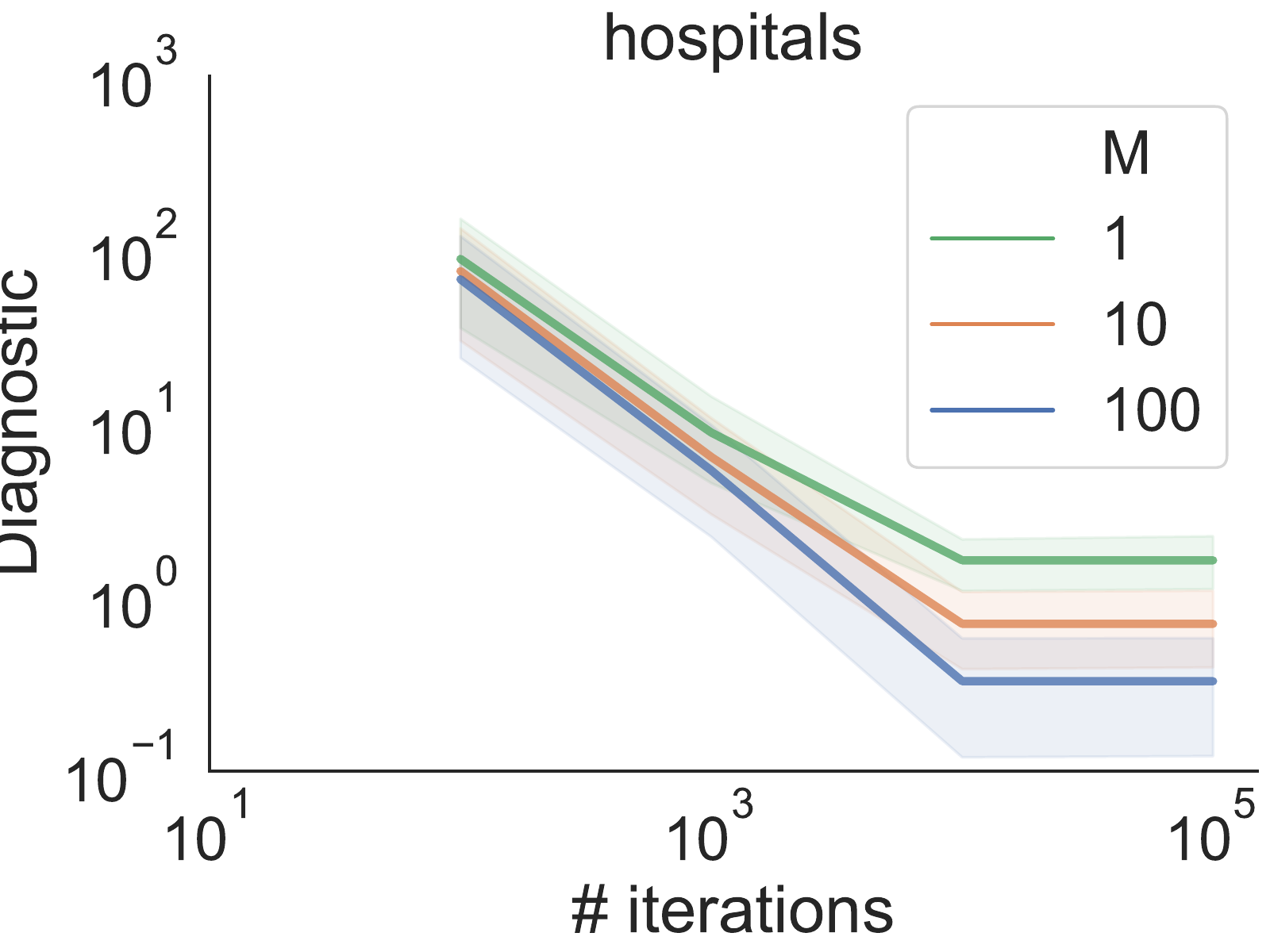}
\par\end{centering}
\caption{The diagnostic applied to self-normalized importance-weighting with
$M$ samples. The proposal distribution is estimated by Laplace's
method with adjustment. The $M=1$ curve is the same as the \textquotedblleft laplace
(adjusted)\textquotedblright{} curve in \ref{fig:diagnostic-vi-laplace}.
\label{fig:diagnostic-importance-weighting-laplace}}
\end{figure*}

\subsection{Diagnosing Importance Weighted Inference}

The above discussion shows that adding importance-sampling can improve
any distribution $q.$ Even better results can be obtained by explicitly
optimizing $q$ to work well after augmentation. Importance-weighted
variational inference directly performs an optimization to maximize
the ELBO from $q_{IW}$ to $p_{IW},$ equivalent to minimizing the
KL-divergence $\KL{q_{IW}}{p_{IW}}$ between the augmented distributions.
This ELBO can be simplified using the relationship that

\begin{multline}
\E_{q_{IW}\pp{\zr_{1},\cdots,\zr_{M}\vert\x}}\log\frac{p_{IW}\pp{\zr_{1},\cdots,\zr_{M},\x}}{q_{IW}\pp{\zr_{1},\cdots,\zr_{M}\vert\x}}=\E_{q\pp{\zr_{1}\vert\x}}\cdots\E_{q\pp{\zr_{M}\vert\x}}\log\frac{1}{M}\sum_{m=1}^{M}\log\frac{p\pp{\zr_{m},\x}}{q\pp{\zr_{m}\vert\x}},\label{eq:IW-ELBO}
\end{multline}
which follows from cancellations between $p_{IW}$ and $q_{IW}$,
followed from the observation that the argument of the expectation
is constant with respect to permutations of $\zr_{1},\cdots,\zr_{m}.$
This objective was originally introduced in the context of importance-weighted
auto-encoders \citep{Burda_2015_ImportanceWeightedAutoencoders,Cremer_2017_ReinterpretingImportanceWeightedAutoencoders,Domke_2018_ImportanceWeightingVariational,Naesseth_2018_VariationalSequentialMonte,Le_2018_AutoEncodingSequentialMonte}
(without explicitly identifying $q_{IW}$ and $p_{IW}$) and subsequently
studied by various others \citep{Cremer_2017_ReinterpretingImportanceWeightedAutoencoders,Domke_2018_ImportanceWeightingVariational,Bachman_2015_TrainingDeepGenerative,Naesseth_2018_VariationalSequentialMonte,Le_2018_AutoEncodingSequentialMonte}.

The following result specializes \ref{cor:main-result-hidden} to
the case of importance-weighted inference.

\begin{restatable}{cor}{mainIWVI}

Given $p\pp{\z,\x}$ and $q\pp{\z\vert\x},$ define $p_{IW}$ and
$q_{IW}$ as in \ref{eq:p_IW} and \ref{eq:q_IW}. Further, set $q_{IW}\pp{\z_{1},\cdots,\z_{M},\x}=p\pp{\x}q_{IW}\pp{\z_{1},\cdots,\z_{M}\vert\x}.$
Then\label{cor:main-result-IWVI}\vspace{-0.5cm}

\begin{alignat*}{1}
\SKL{q_{IW}\pp{\r z_{1},\cdots,\zr_{M},\r x}}{p_{IW}\pp{\r z_{1},\cdots,\r x}} & =\E\log\frac{1}{M}\sum_{m=1}^{M}\frac{p\pp{\z_{m},\x}}{q\pp{\z_{m}\vert\x}}-\log\frac{1}{M}\sum_{m=1}^{M}\frac{p\pp{\tilde{\z}_{m},\x}}{q\pp{\tilde{\z}_{m}\vert\x}},
\end{alignat*}
where $\pp{\r z_{1},\r x}\sim p\pp{\z,\x}$ is sampled from the model
distribution, $\zr_{2},\cdots,\zr_{M}\sim q\pp{\z\vert\xr}$ is sampled
from the approximating distribution and $\tilde{\zr}_{1},\cdots,\tilde{\zr}_{M}\sim q\pp{\z\vert\xr}$
are also sampled from the approximating distribution.

\end{restatable}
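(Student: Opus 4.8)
The plan is to apply \ref{cor:main-result-hidden} with the identification $\z\leftrightarrow\z_{1}$ and $\h\leftrightarrow\pp{\z_{2},\cdots,\z_{M}}$, treating the extra samples as the hidden variables that augment $\z_{1}$. First I would verify that $p_{IW}$ and $q_{IW}$ genuinely fit the augmentation template of that corollary. Marginalizing \ref{eq:p_IW} over $\z_{2},\cdots,\z_{M}$ recovers the original model $p\pp{\z_{1},\x}$, since each factor $q\pp{\z_{m}\vert\x}$ integrates to one; hence the implied augmenting conditional is $p\pp{\h\vert\z,\x}=\prod_{m=2}^{M}q\pp{\z_{m}\vert\x}$. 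This is exactly the distribution from which $\zr_{2},\cdots,\zr_{M}$ are drawn in the statement, so the model-side sampling $\pp{\zr_{1},\xr}\sim p\pp{\z,\x}$ followed by $\zr_{2},\cdots,\zr_{M}\sim q\pp{\z\vert\xr}$ matches the sampling required by \ref{cor:main-result-hidden}.

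With this identification, \ref{cor:main-result-hidden} expresses the symmetric divergence as the expectation of the log-ratio $p_{IW}/q_{IW}$ evaluated at a model-side sample minus the same log-ratio evaluated at a $q_{IW}$ sample. The next step is to simplify this log-ratio, which is where the cancellations do the work. Dividing \ref{eq:p_IW} by \ref{eq:q_IW} eliminates $p_{IW}$ entirely and leaves
\[
\frac{p_{IW}\pp{\z_{1},\cdots,\z_{M},\x}}{q_{IW}\pp{\z_{1},\cdots,\z_{M}\vert\x}}=\frac{1}{M}\sum_{m=1}^{M}\frac{p\pp{\z_{m},\x}}{q\pp{\z_{m}\vert\x}}.
\]
Substituting this into both terms immediately produces the two log-average-weight expressions appearing in the claimed identity, and the model-side expectation is already in the desired form by the previous paragraph.

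The one remaining subtlety, which I expect to be the crux, is the sampling for the second term. \ref{cor:main-result-hidden} requires $\pp{\tilde{\zr}_{1},\cdots,\tilde{\zr}_{M}}\sim q_{IW}\pp{\z_{1},\cdots,\z_{M}\vert\xr}$, whereas the statement asks only for i.i.d.\ draws $\tilde{\zr}_{1},\cdots,\tilde{\zr}_{M}\sim q\pp{\z\vert\xr}$. To bridge this gap I would combine two observations. First, the summand $\frac{1}{M}\sum_{m=1}^{M}p\pp{\z_{m},\x}/q\pp{\z_{m}\vert\x}$ is invariant under permutations of its arguments, so the value of the second log-ratio is unchanged by any reordering of $\pp{\tilde{\zr}_{1},\cdots,\tilde{\zr}_{M}}$. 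Second, by the preceding claim, a draw from $q_{IW}$ is obtained by sampling $\hat{\zr}_{1},\cdots,\hat{\zr}_{M}\sim q\pp{\z\vert\x}$ i.i.d.\ and then applying a (data-dependent) permutation that moves the importance-selected sample to the front. Because the averaged function is permutation-symmetric, its expectation is insensitive to this final permutation step, and therefore equals its expectation under the i.i.d.\ base distribution. This justifies replacing the $q_{IW}$ draws with i.i.d.\ $q$ draws and completes the proof.
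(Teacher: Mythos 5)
Your proof is correct and follows essentially the same route as the paper's: apply \ref{cor:main-result-hidden} with $\h=\pp{\z_{2},\cdots,\z_{M}}$, cancel $p_{IW}$ against $q_{IW}$ to reduce the log-ratio to the average importance weight, and use permutation symmetry to replace the $q_{IW}$ draw by i.i.d.\ draws from $q$. Your explicit justification of that last step via the sampling procedure in the Claim is a slightly more detailed version of the paper's one-line permutation-invariance assertion, but it is the same idea.
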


A proof is in the supplement. Unlike \ref{cor:main-result-conditional}
and \ref{cor:main-result-hidden}, the result is not trivial. The
main idea is to substitute $p_{IW}\pp{\z_{1},\cdots,\z_{M},\x}$ for
$p\pp{\z,\h,\x}$ and $q_{IW}\pp{\z_{1},\cdots,\z_{M}\vert\x}$ for
$q\pp{\z,\h\vert\x}.$ Then, many expressions can be simplified based
on the particular forms of $p_{IW}$ and $q_{IW}$. Finally, we can
observe that the argument of the expectation is independent of permutations
of $\tilde{\z}_{1},\cdots,\tilde{\z}_{M}$. This allows a final simplification.

\section{Experiments\label{sec:Experiments}}

\begin{figure*}[t]
\begin{centering}
\includegraphics[scale=0.3]{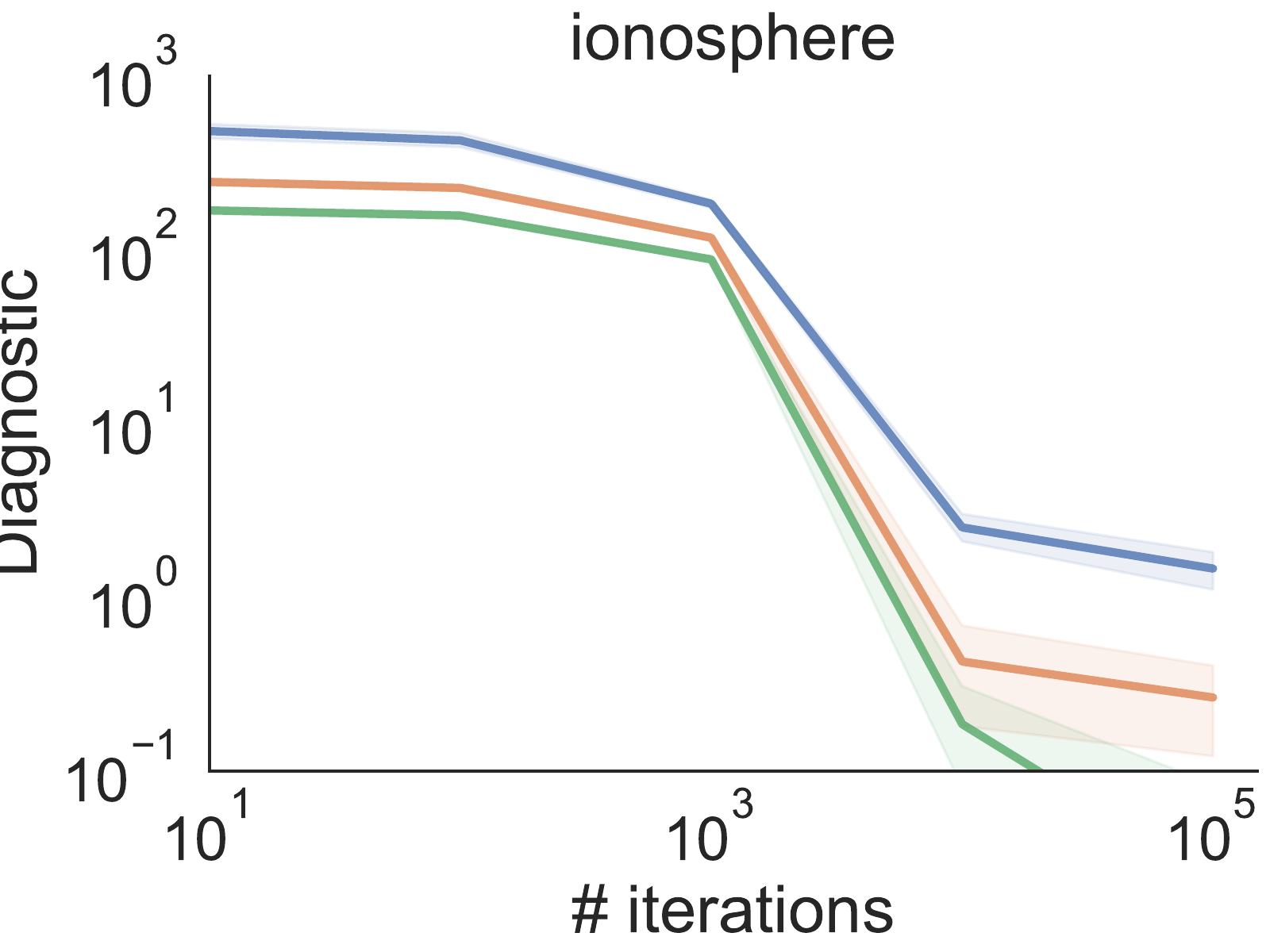}\includegraphics[viewport=32.27bp 0bp 461bp 346bp,clip,scale=0.3]{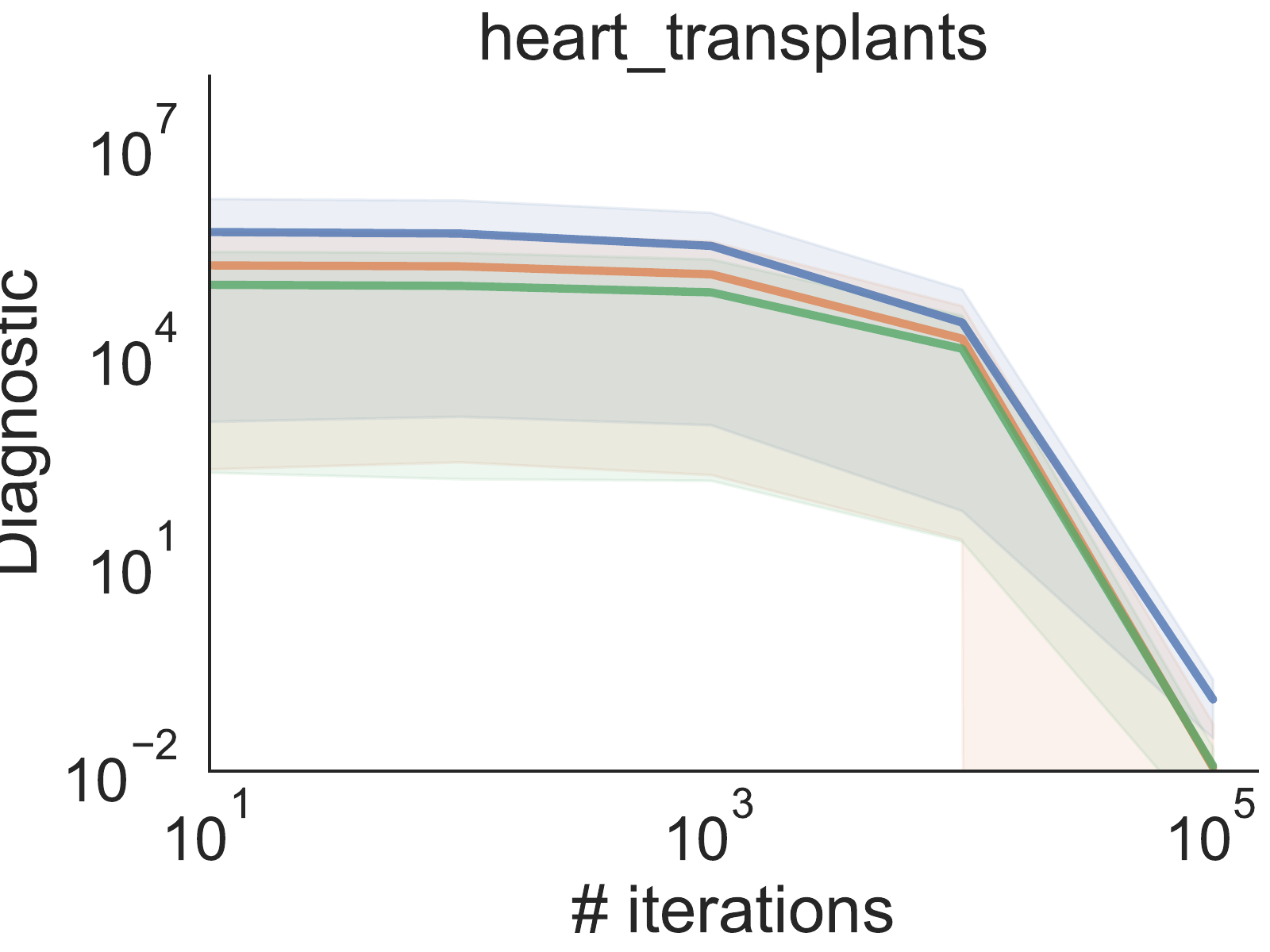}\includegraphics[viewport=32.27bp 0bp 461bp 346bp,clip,scale=0.3]{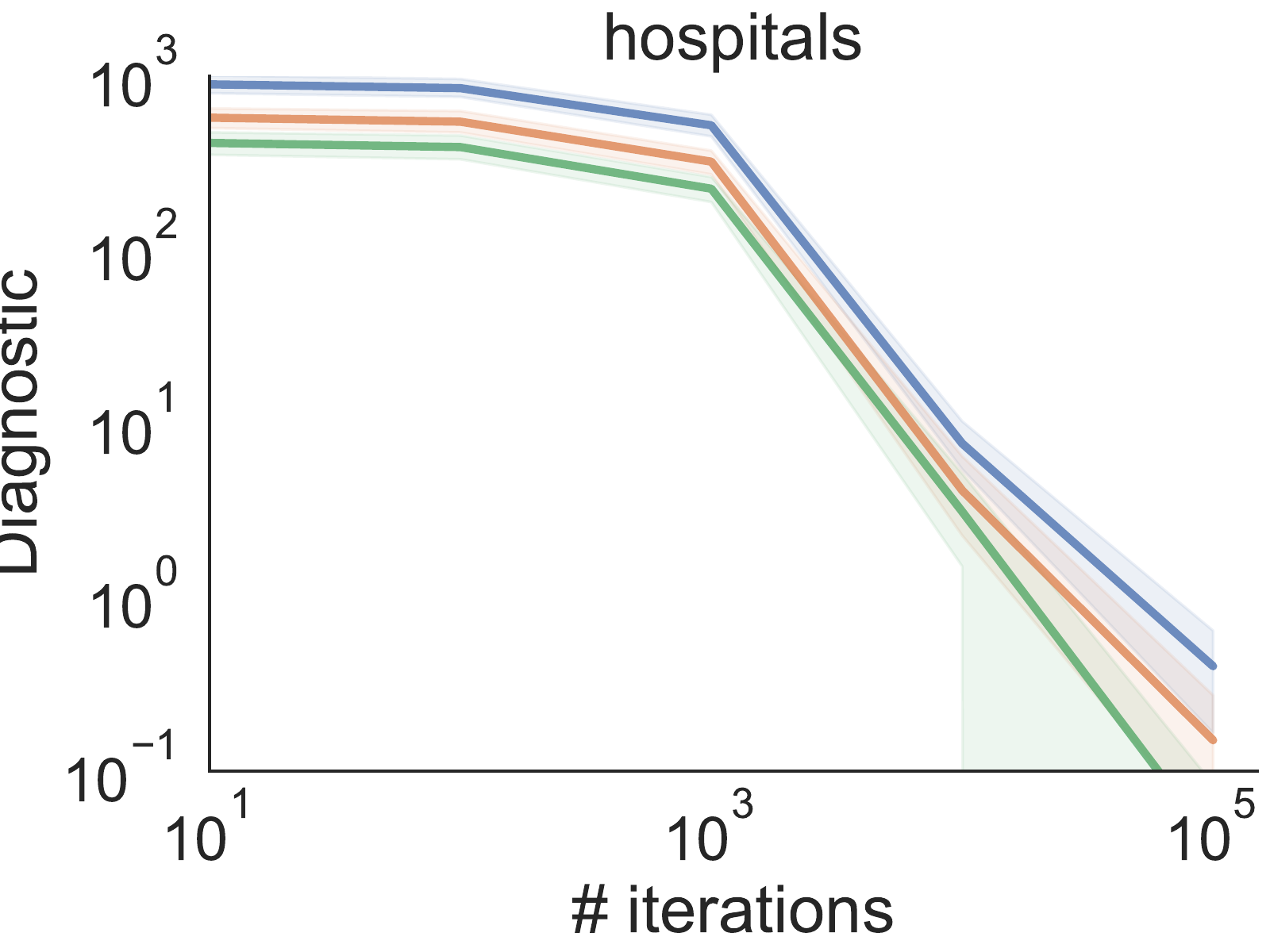}
\par\end{centering}
\begin{centering}
\includegraphics[scale=0.3]{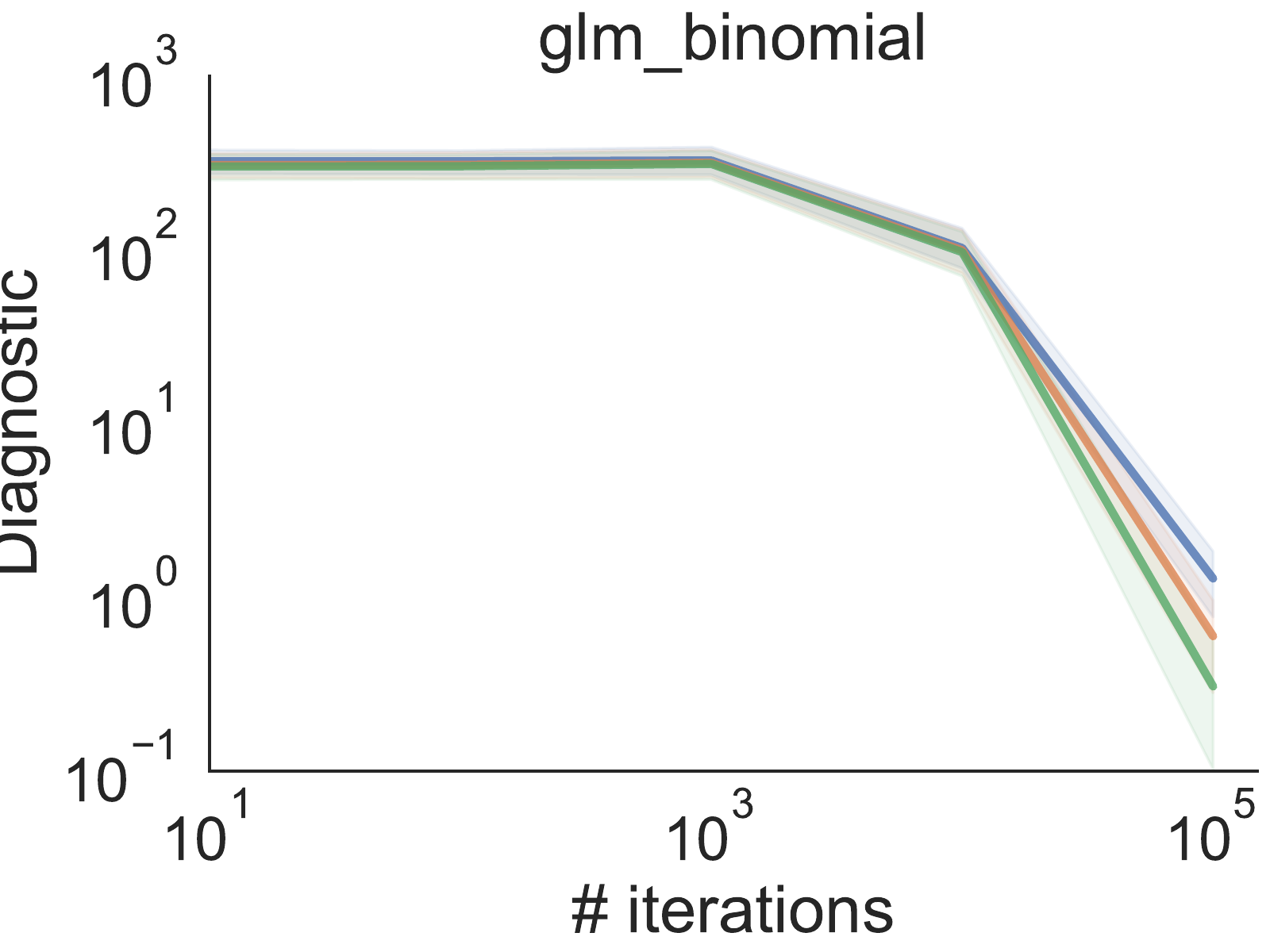}\includegraphics[viewport=23.05bp 0bp 461bp 346bp,clip,scale=0.3]{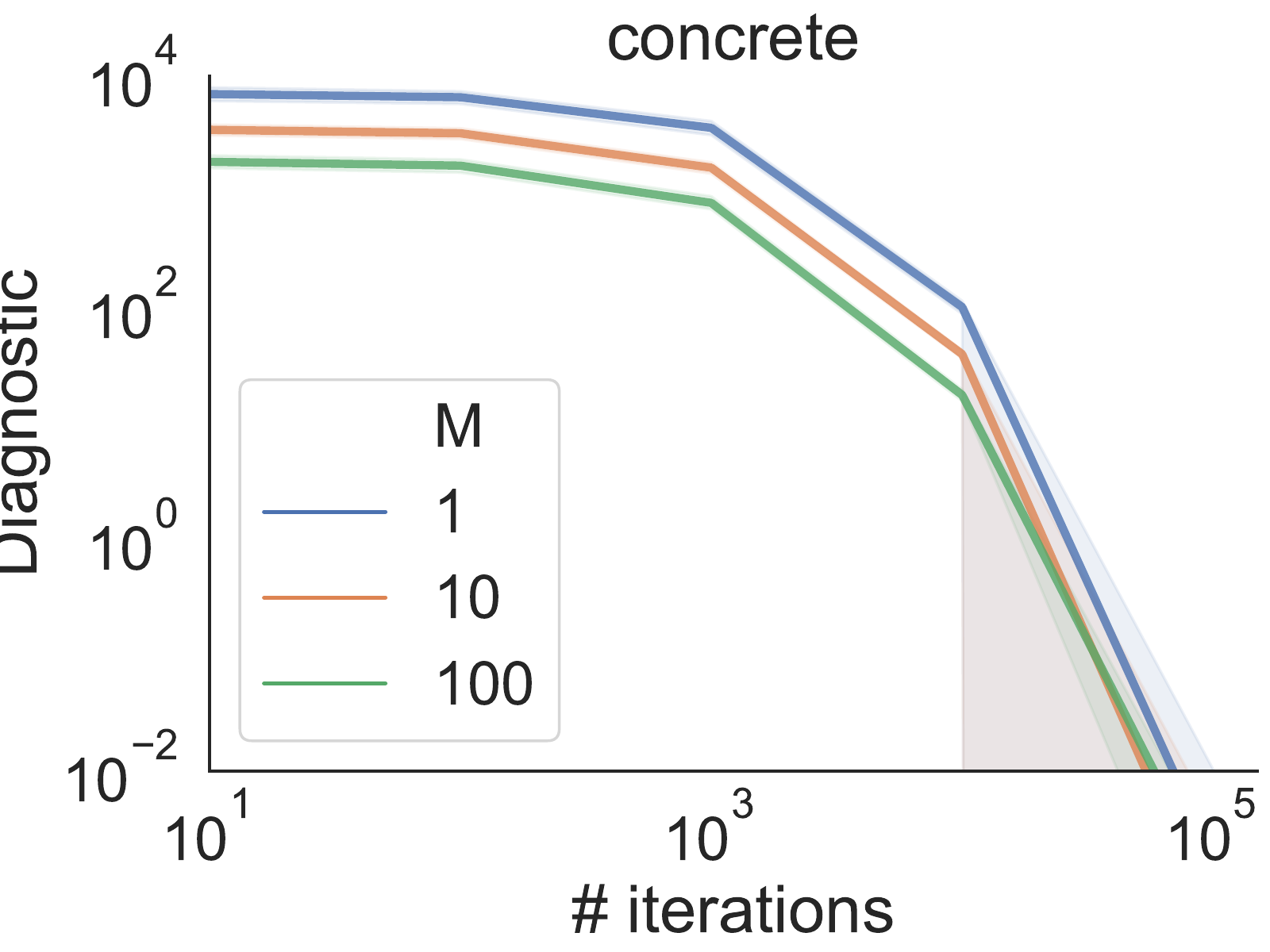}
\par\end{centering}
\caption{The diagnostic as computed using \ref{alg:diagnostic-IW} where parameters
are determined by optimizing \ref{eq:IW-ELBO} with the same value
$M$. The $M=1$ curve is the same as the \textquotedblleft vi\textquotedblright{}
curve in \ref{fig:diagnostic-vi-laplace}. \label{fig:diagnostic-importance-weighting}}
\end{figure*}

\textbf{Models.} We use five models, described in detail in Sec. \ref{sec:Models}
(Supplement). \texttt{glm\_binomial} is a hierarchical model of the
number of a bird population over time. \texttt{heart\_transplants}
models the survival times of patients after surgery. \texttt{hospitals}
measures the number of deaths in different hospitals. \texttt{ionosphere}
is a Bayesian logistic regression model on a classic dataset. \texttt{concrete}
is a Bayesian linear regression model -- included as a baseline because
the exact posterior is Gaussian.

\textbf{Optimization}. The first inference algorithm we consider is
Laplace's method which produces a multivariate Gaussian approximation
$q\pp{\z\vert\x}$. For this method, we run Adam with a step-size
of 0.01 for the first half of iterations, and 0.001 for the second
half. The resulting point $\hat{\z}$ is the mean. Finally, the Hessian
of $\log p$ at $\hat{\z}$ is used to estimate the covariance of
$q$ as $\Sigma=-H^{-1}.$ This method fails to match the local curvature
of $\log p$ when $\hat{\z}$ is far from an optima. We also consider
an ``adjusted'' Laplace's method that instead uses a mean of $H^{-1}g$
where $g$ is that gradient of $\log p$ at $\hat{\z}$. This guarantees
that $\log q$ has the same gradient at $\hat{\z}$ as $\log p$. 

We also consider variational inference. We initialize to a standard
Gaussian and optimize with Adam with a step size of 0.001 for the
first half of iterations, and 0.0001 for the second half. We estimate
the gradient of the ELBO using the reparameterization trick, using
the ``sticking the landing'' estimator \citep{Roeder_2017_StickingLandingSimple}.

\textbf{Constraints and Transformations}. Often, random variables
have constraints -- they are not supported over the reals. As is
common \citep{Kucukelbir_2017_AutomaticDifferentiationVariational}
we deal with this through a process of \emph{transformations}. For
our models, it is sufficient to consider two cases:
\begin{itemize}
\item Random variables $\r x$ that are either defined over the non-negative
reals $[0,\infty)$. In this case, we replace $\r x$ with a new random
variable $\r x'=\log\r x$ that is unconstrained.
\item Random variables $\r x$ defined on a closed interval $\bb{a,b}.$
Here, we transform to $\r x'=\mathrm{Logit}\pp{\frac{\r x-a}{a-b}}.$
\end{itemize}
\textbf{Results}. Results comparing VI and the two variants of Laplace's
method are shown in \ref{fig:diagnostic-vi-laplace}, averaging over
$K=100$ simulated datasets. Laplace's method is reasonably accurate
in many cases, but usually has a ``floor'' of accuracy it does not
exceed. The adjustment to Laplace's method is often helpful and never
harmful. VI performs better with many iterations. For these models,
the diagnostic shows that inference error is reasonably low with many
iterations, but not quite ``exact''.

\ref{fig:diagnostic-importance-weighting-laplace} shows the results
of importance sampling with a proposal computed using Laplace's method
with adjustment. Using more samples yields a clear improvement. 

Finally, results with full importance weighted variational inference
(optimizing the \ref{eq:IW-ELBO} rather than the standard ELBO) is
shown in \ref{fig:diagnostic-importance-weighting}. The same value
$M$ is used during optimization and at test time.

\section{Discussion}

This paper proposed a new diagnostic for approximate inference methods.
This is a simulation-based diagnostic, meaning it is computed by repeatedly
simulating latent variables along with datasets and running inference
on each dataset. The central idea is that cancellations in unknown
constants make it possible to estimate a symmetric divergence. This
is notable in being a simple, scalar quantity with a clear information-theoretic
interpretation. It can also be computed in a fully automated way along
with error measures like confidence intervals. We showed that the
diagnostic can be extended to augmented inference methods, in particular
importance-weighted inference. Empirically, the method gives reasonable
diagnostic information on several test models.

While \ref{thm:main-result} is quite simple, there are numerous points
worth clarifying in its use as a diagnostic:

\textbf{What the diagnostic measures}. One possibly counter-intuitive
aspect of this diagnostic (like all simulation-based diagnostics)
is that it does not use the actual observed data $\x$. Rather, it
measures the \emph{typical} error, averaged over $\x$ simulated from
the model $p\pp{\x}$. It is this essential that the prior $p\pp{\z}$
and likelihood $p\pp{\x\vert\z}$ be selected so that $p\pp{\x}=\int p\pp{\z}p\pp{\x\vert\z}d\z$
yields realistic simulated datasets. In particular, very broad priors
are might lead to ``nonsense'' observations $\x$ that are unrepresentative
of the data that would be seen in practice.

\textbf{Computational considerations}. In order to compute this diagnostic,
one must be able to perform several operations: (1) Simulate $\pp{\zr,\xr}\sim p\pp{\z,\x}$
and $\tilde{\zr}\sim q\pp{\z\vert\xr}$. (2) Compute $p\pp{\z,\x}$
for a given $\z$ and $\x$. (3) Compute $q\pp{\z\vert\x}$ for a
given $\z$ and $\x$. Crucially, it is \emph{not} necessary to be
able to evaluate $p\pp{\x}$. This is true despite the fact that $p\pp{\x}$
is part of the definition of $q\pp{\z,\x}$.

\textbf{Other representations of the diagnostic.} The quantity that
the diagnostic is representing can be written in a different form
that emphasizes that it measures errors over $\z$. This uses the
notion of a conditional divergence $\KL{q\pp{\zr\vert\xr}}{p\pp{\zr\vert\xr}}.$
It is not hard to show that in the setting of \ref{thm:main-result}
that
\[
\SKL{q\pp{\zr,\xr}}{p\pp{\zr,\xr}}=\SKL{q\pp{\zr\vert\xr}}{p\pp{\zr\vert\xr}}.
\]

This is true because of cancellations between $q$ and $p$ due to
the inclusion of the $p\pp{\x}$ term in $q\pp{\x,\z}.$

\textbf{Use with randomized inference methods}. In practice, approximate
inference algorithms are often non-deterministic. This is not reflected
by the notation $q\pp{\z\vert\x}$. With non-determinism, no modification
to the diagnostic technique in \ref{alg:simple-diagnostic} is needed.
Only the interpretation is slightly different. To formalize what the
diagnostic measures in this case, define $q\pp{\z\vert\x,\bm{\omega}}$
to be the approximate posterior produced where $\bm{\omega}$ are
the random numbers underlying the algorithm. Then, a diagnostic can
be defined as the \emph{expected} divergence between $q$ and $p$,
i.e., $\E_{\bm{\r{\omega}}}\SKL{q\pp{\zr,\r x\vert\bm{\r{\omega}}}}{p\pp{\zr,\r x}},$
where $q\pp{\z,\x\vert\bm{\r{\omega}}}=p\pp{\x}q\pp{\z\vert\x,\bm{\r{\omega}}}.$
This is still a very reasonable measure of the accuracy of inference.
For simplicity, our presentation mostly neglects the issue of non-determinism
in approximating distributions.

\subsection{Related Work}

There are several lines of related work not mentioned so far. One
recent line of work explores inference diagnostics based on Stein's
method \citep{Gorham_2017_MeasuringSampleQuality,Gorham_2015_MeasuringSampleQuality}.
The idea is to create sets of functions whose true expectations must
be zero. Any deviation from zero in those functions indicates inference
failure. This diagnostic can be used with methods that are asymptotically
approximate. However, it is intended for cases when error decreases
to zero. There is no claim that the magnitude of the diagnostic is
a good measure of the usefulness of an approximate posterior.

The test proposed by \citet{Geweke_2004_GettingItRight} is an interesting
early diagnostic that repeatedly simulates datasets in a non-independent
manner. The idea is to iteratively sample $\x\sim p\pp{\x\vert\z}$,
the run inference to produce $q\pp{\z\vert\x}$ and then sample $\z\sim q\pp{\z\vert\x}.$
Then, one compares the expectation of some function $g\pp{\z,\x}$
to those on exact samples $\pp{\z,\x}\sim p\pp{\z,\x}.$ If $q$ is
exact, these expectations should match. We prefer an approach where
each simulation is independent since this is easier to parallelize,
avoids correlations between simulations, and makes it easier to compute
error measures like confidence intervals.

Bidirectional MCMC \citep{Grosse_2015_Sandwichingmarginallikelihood}
runs MCMC on repeated simulated datasets to get upper and lower bounds
on the marginal likelihood $\log p\pp{\x}.$ This is intended as a
technique to evaluate the quality of a model, not as a diagnostic
for inference. Still, in principle one could use these to transform
an ELBO into bounds on the KL-divergence. One drawback is the expense
of repeatedly running MCMC. Typically, variational inference is used
in settings where MCMC would be too expensive.

\subsection{Limitations and Future Work}

This work has several limitations shared with all simulation-based
diagnostics: First, computing them requires repeating inference numerous
times. This comes with an associated cost. Second, these methods can
be overly pessimistic when used with extremely broad or uninformative
priors. It is important that the model is chosen so that simulated
data are representative of the datasets one cares about. Third, the
diagnostic measures \emph{average accuracy over data simulated the
prior}, as opposed to the expected accuracy for a \emph{particular
dataset.} (Put another way, the diagnostic is arguably frequentist
rather than Bayesian.)

One might be concerned about the success of this diagnostic when used
with variational inference methods. Namely, VI typically minimizes
$\KL qp$ while the diagnostic is based on the symmetric divergence.
Informally, VI cares about finding a distribution that is close in
a ``mode finding'' divergence, while the diagnostic measures both
``mode finding'' and ``mode spanning''. It is possible that a
distribution could be close in VI's objective, yet yield a high diagnostic
value. This is arguably a flaw not of the diagnostic, but of variational
inference. One interesting future direction would be to investigate
recent VI variants that try to minimize other divergences \citep{Li__RenyiDivergenceVariational,Dieng_2017_VariationalInferencechi}.

In future work, it would be interesting to address MCMC methods. Of
course, most MCMC methods are not suitable for this framework. However,
some methods like annealed importance sampling \citep{Neal_1998_AnnealedImportanceSampling}
formally create augmented target and proposal densities at a variety
of ``temperatures''. It may be possible to use the diagnostic proposed
here to measure the symmetric divergence between these augmented distributions.
This could potentially offer a diagnostic for MCMC with the unusual
property that the diagnostic going to zero is both necessary and sufficient
to guarantee convergence to the stationary distribution.

\cleardoublepage{}

\bibliographystyle{plainnat}
\bibliography{justindomke_zotero_betterbibtex2}
\cleardoublepage\onecolumn

\section{Models\label{sec:Models}}

\textbf{GLM Binomial}. This is a model of the number of peregrine
pairs $c_{i}$ in the French Jura in year $x_{i}.$ The data is from
between $1964$ and $2003,$ but $x_{i}$ is scaled to between $-1$
and $+1$. The model and data are from \citet[Sec. 3.5]{Kery_2012_Bayesianpopulationanalysis}.

\begin{align*}
\r{\alpha} & \sim\mathcal{N}\pp{0,\pp{10}^{2}}\\
\r{\beta}_{1} & \sim\mathcal{N}\pp{0,\pp{10}^{2}}\\
\r{\beta}_{2} & \sim\mathcal{N}\pp{0,\pp{10}^{2}}\\
\r c_{i} & \sim\mathrm{Binomial}\pp{n_{i},\alpha+\beta_{1}x_{i}+\beta_{2}x_{i}^{2}}
\end{align*}

\textbf{Heart Transplants}. This is a model of a hypothetical population
of  patients who underwent a surgery, of whom $\r y_{T}=8$ survived
\citep[Ex. 3.5.1]{Lunn_2013_BUGSbookpractical}. These were tracked
to see the number of years $\r s_{i}$ ($i\in\left\{ 1,\cdots,8\right\} $)
that the $i$-th patient who survived surgery lived post-surgery.
This is assumed to be determined by an exponential distribution with
parameter $\theta$. Thus, the model is:

\begin{align*}
\r p_{T} & \sim\mathrm{Uniform}\pp{0,1}\\
\r y_{T} & \sim\mathrm{Binomial}\pp{N,\r p_{T}}\\
\r{\theta} & \sim\mathrm{Gamma}\pars{1/3,1/3}\\
\r s_{i} & \sim\mathrm{Exponential}\pp{\theta}
\end{align*}

Note that, when generating synthetic datasets for this model, we always
use the same set of variables $\r s_{1},\cdots,\r s_{8}$, independent
of the value of $\r y_{T}$. This is done because of the difficulties
posed by having different dimensionality in different realizations
of the posterior. While not fully in keeping with the spirit of the
original model, this still defines a perfectly valid probabilistic
model and test of the diagnostic.

\textbf{Hospitals}. This is a  hierarchical model of the mortality
rate of  English hospitals performing heart surgery \citep[Ex. 10.1.1]{Lunn_2013_BUGSbookpractical}.
The data is $\left\{ \pp{n_{i},y_{i}}\right\} $ where $n_{i}$ is
the number of operations in hospital $i$ and $y_{i}$ is the corresponding
number of deaths. The logit of the true mortality rate $\theta_{i}$
of hospital $i$ is a Gaussian with unknown mean $\mu$ and standard
deviation $\omega$. The latent variables are $\omega$, $\mu$, and
$\left\{ \theta_{i}\right\} $.  \vspace{-0.6cm}

\begin{align*}
\r{\omega} & \sim\mathrm{Uniform}\pars{.25,1}\\
\r{\mu} & \sim\mathrm{Uniform}\pars{-3,3}\\
\mathrm{Logit}\pp{\theta_{i}} & \sim\mathcal{N}\pars{\mu,\r{\omega}^{2}}\\
y_{i} & \sim\mathrm{Binomial}\pp{n_{i},\theta_{i}}
\end{align*}

The original model has a very wide prior on $\omega$ and $\r{\mu}$,
which leads to the problems discussed in \ref{subsec:new-diagnostic-discussion}.
We use the above model with more modest priors.

\textbf{Ionosphere}. This is a classic dataset for binary classification.
We model it as a Bayesian logistic regression problem with a standard
Gaussian prior over the weights $\r w.$

\textbf{Concrete}. This is a well-known dataset for linear regression.
We model it as a Bayesian linear regression problem with a standard
Gaussian prior over the weights $\r w.$ This model is particularly
notable because the true posterior is exactly Gaussian. Since both
Laplace's method and variational inference can exactly represent such
a posterior, this provides an important test if the diagnostic can
correctly recognize inference success when it occurs.

\cleardoublepage{}

\section{Theory}

\mainresult*
\begin{proof}
The divergence $\SKL{q\pp{\zr,\xr}}{p\pp{\zr,\xr}}$ is equal to

\begin{alignat*}{1}
\E_{p\pp{\zr,\xr}}\log\frac{p\pp{\zr,\xr}}{q\pp{\zr,\xr}}+\E_{q\pp{\zr,\xr}}\log\frac{q\pp{\zr,\xr}}{p\pp{\zr,\xr}} & =\E_{p\pp{\xr}}\bracs{\E_{p\pp{\zr\vert\xr}}\log\frac{p\pp{\zr,\xr}}{q\pp{\zr,\xr}}+\E_{q\pp{\zr\vert\xr}}\log\frac{q\pp{\zr,\xr}}{p\pp{\zr,\xr}}}\\
 & =\E_{p\pp{\xr}}\bracs{\E_{p\pp{\zr\vert\xr}}\log\frac{p\pp{\zr,\xr}}{q\pp{\zr\vert\xr}}+\E_{q\pp{\zr\vert\xr}}\log\frac{q\pp{\zr\vert\xr}}{p\pp{\zr,\xr}}}
\end{alignat*}

In the first line we use the fact that $q\pp{\x}=p\pp{\x},$ while
in the second line we pull out a factor of $\log q\pp{\x}$ from each
term, which cancel. The claimed result is the same as the last line
with a sign change.
\end{proof}

\mainIWVI*
\begin{proof}
Start with the result of \ref{cor:main-result-hidden}. 
\begin{alignat*}{1}
\SKL{q\pp{\r z,\r h,\r x}}{p\pp{\r z,\r h,\r x}} & =\E\log\frac{p\pp{\r z,\r h,\r x}}{q\pp{\r z,\r h|\r x}}-\log\frac{p\pp{\tilde{\r z},\tilde{\r h},\r x}}{q\pp{\tilde{\r z},\tilde{\r h}|\r x}}\\
\pp{\r z,\r x} & \sim p\pp{\z,\x}\\
\r h & \sim p\pp{\h|\z,\x}\\
\pp{\tilde{\r z},\tilde{\r h}} & \sim q\pp{\z,\h|\x}.
\end{alignat*}
Now, make the following transformations
\begin{eqnarray*}
q & \Rightarrow & q_{IW}\\
p & \Rightarrow & p_{IW}\\
\z & \Rightarrow & \z_{1}\\
\h & \Rightarrow & \pp{\z_{2},\cdots,\z_{M}}.
\end{eqnarray*}
Then, we get
\begin{alignat*}{1}
\SKL{q_{IW}\pp{\r z_{1}\cdots,\zr_{M},\r x}}{p\pp{\r z_{1},\cdots,\zr_{M},\r x}} & =\E\log\frac{p_{IW}\pp{\r z_{1}\cdots,\zr_{M},\r x}}{q_{IW}\pp{\r z_{1}\cdots,\zr_{M}|\r x}}-\log\frac{p_{IW}\pp{\tilde{\r z}_{1}\cdots,\tilde{\zr}_{M},\r x}}{q_{IW}\pp{\tilde{\r z}_{1}\cdots,\tilde{\zr}_{M}|\r x}}\\
\pp{\r z_{1},\r x} & \sim p_{IW}\pp{\z_{1},\x}\\
\r z_{2}\cdots,\zr_{M} & \sim p_{IW}\pp{\z_{2}\cdots,\z_{M}|\z,\x}\\
\pp{\tilde{\zr}_{1},\cdots,\tilde{\zr}_{M}} & \sim q_{IW}\pp{\z_{1}\cdots,\z_{M}|\x}.
\end{alignat*}

Now, note that
\begin{eqnarray*}
\frac{p_{IW}\pp{\r z_{1}\cdots,\zr_{M},\r x}}{q_{IW}\pp{\r z_{1}\cdots,\zr_{M}|\r x}} & = & \frac{1}{M}\sum_{m=1}^{M}\frac{p\pp{\z_{m},\x}}{q\pp{\z_{m}\vert\x}}\\
\pp{\r z_{1},\r x} & \sim & p_{IW}\pp{\z_{1},\x}\\
 & = & p\pp{\z_{1},\x}\\
\r z_{2}\cdots,\zr_{M} & \sim & p_{IW}\pp{\z_{2}\cdots,\z_{M}|\z,\x}\\
 & = & \prod_{m=1}^{M}q\pp{\z_{m}\vert\x}
\end{eqnarray*}
This leaves us with the result of
\begin{alignat*}{1}
 & \SKL{q_{IW}\pp{\r z_{1},\cdots,\zr_{M},\r x}}{p_{IW}\pp{\r z_{1},\cdots,\r x}}\\
 & =\E\log\frac{1}{M}\sum_{m=1}^{M}\frac{p\pp{\z_{m},\x}}{q\pp{\z_{m}\vert\x}}-\log\frac{1}{M}\sum_{m=1}^{M}\frac{p\pp{\tilde{\z}_{m},\x}}{q\pp{\tilde{\z}_{m}\vert\x}},\\
 & =\E\log\sum_{m=1}^{M}\frac{p\pp{\z_{m},\x}}{q\pp{\z_{m}\vert\x}}-\log\sum_{m=1}^{M}\frac{p\pp{\tilde{\z}_{m},\x}}{q\pp{\tilde{\z}_{m}\vert\x}},
\end{alignat*}
\begin{alignat*}{1}
\pp{\r z_{1},\r x} & \sim p\pp{\z,\x}\\
\zr_{2},\cdots,\zr_{M} & \sim q\pp{\z\vert\x}\\
\pp{\tilde{\zr}_{1},\cdots,\tilde{\zr}_{M}} & \sim q_{IW}\pp{\z_{1}\cdots,\z_{M}|\x}.
\end{alignat*}

Now, finally, note that the expectation is unchanged under permutations
of the order of $\tilde{\zr}_{1},\cdots,\tilde{\zr}_{M}$. Thus, the
expectation is unchanged if we replace the distribution with
\[
\tilde{\zr}_{1},\cdots,\tilde{\zr}_{M}\sim q\pp{\z\vert\x}.
\]
\end{proof}

\end{document}